\documentclass{article}
\usepackage{amsmath, amsthm}

\usepackage{xcolor,graphicx,float}
\usepackage{booktabs}
\usepackage{makecell}
\usepackage{multirow}
\usepackage{mathtools}
\usepackage{adjustbox}

\PassOptionsToPackage{numbers, compress}{natbib}

\newtheorem{theorem}{Theorem}
\newtheorem{lemma}{Lemma}
\newtheorem{proposition}{Proposition}
\usepackage[final]{neurips_2025}




\usepackage[utf8]{inputenc} 
\usepackage[T1]{fontenc}    
\usepackage{hyperref}       
\usepackage{url}            
\usepackage{booktabs}       
\usepackage{amsfonts}       
\usepackage{nicefrac}       
\usepackage{microtype}      
\usepackage{xcolor}         

\usepackage{xspace}
\usepackage{enumitem}





\newcommand{\EE}{\mathbb{E}}

\DeclareMathOperator*{\argmin}{arg\,min}

\newcommand{\Fcal}{\mathcal{F}}
\newcommand{\Gcal}{\mathcal{G}}
\newcommand{\Scal}{\mathcal{S}}
\newcommand{\Acal}{\mathcal{A}}
\newcommand{\Tcal}{\mathcal{T}}
\newcommand{\Dcal}{\mathcal{D}}
\newcommand{\Lcal}{\mathcal{L}}
\newcommand{\Mcal}{\mathcal{M}}
\newcommand{\Qcal}{\mathcal{Q}}

\newcommand{\Rmax}{R_{\max}}
\newcommand{\Vmax}{V_{\max}}
\newcommand{\RR}{\mathbb{R}}

\newcommand{\emp}[1]{\widehat{#1}}

\newcommand{\sgn}{\textrm{sgn}}

\newcommand{\data}{\mu}
\newcommand{\Sigcr}{\Sigma^{\textrm{cr}}}
\newcommand{\vc}[2]{\begin{bmatrix} #1 \\ #2 \end{bmatrix}}

\newcommand{\lstd}{LSTD-Tournament\xspace}
\newcommand{\absind}{Sign-flip Average Bellman Error\xspace}
\newcommand{\sel}{selector\xspace}
\newcommand{\Sel}{Selector\xspace}
\newcommand{\sels}{selectors\xspace}

\newcommand{\grav}{\mathbf{g}}
\newcommand{\noise}{\mathbf{n}}
\newcommand{\envg}[1]{M_{\grav}^{#1}}
\newcommand{\envn}[1]{M_{\noise}^{#1}}
\newcommand{\numro}{l}

\renewcommand{\paragraph}[1]{\textbf{#1.}~~}

\usepackage{tcolorbox}
\newcounter{resct}






\DeclarePairedDelimiter{\abr}{\lvert}{\rvert} %
\DeclarePairedDelimiter{\sbr}{[}{]}

\DeclarePairedDelimiter{\rbr}{(}{)}
\DeclarePairedDelimiter{\nbr}{\|}{\|}

\DeclarePairedDelimiter{\abs}{\lvert}{\rvert} %

\DeclarePairedDelimiter{\crl}{\{}{\}}
\DeclarePairedDelimiter{\prn}{(}{)}
\DeclarePairedDelimiter{\nrm}{\|}{\|}

\let\Pr\undefined

\DeclareMathOperator{\En}{\mathbb{E}}

\DeclareMathOperator{\Pr}{Pr}





\newcommand{\wh}[1]{\widehat{#1}}

\def\ddefloop#1{\ifx\ddefloop#1\else\ddef{#1}\expandafter\ddefloop\fi}
\def\ddef#1{\expandafter\def\csname bb#1\endcsname{\ensuremath{\mathbb{#1}}}}
\ddefloop ABCDEFGHIJKLMNOPQRSTUVWXYZ\ddefloop
\def\ddefloop#1{\ifx\ddefloop#1\else\ddef{#1}\expandafter\ddefloop\fi}
\def\ddef#1{\expandafter\def\csname b#1\endcsname{\ensuremath{\mathbf{#1}}}}
\ddefloop ABCDEFGHIJKLMNOPQRSTUVWXYZ\ddefloop
\def\ddef#1{\expandafter\def\csname sf#1\endcsname{\ensuremath{\mathsf{#1}}}}
\ddefloop ABCDEFGHIJKLMNOPQRSTUVWXYZ\ddefloop
\def\ddef#1{\expandafter\def\csname c#1\endcsname{\ensuremath{\mathcal{#1}}}}
\ddefloop ABCDEFGHIJKLMNOPQRSTUVWXYZ\ddefloop
\def\ddef#1{\expandafter\def\csname h#1\endcsname{\ensuremath{\widehat{#1}}}}
\ddefloop ABCDEFGHIJKLMNOPQRSTUVWXYZ\ddefloop
\def\ddef#1{\expandafter\def\csname hc#1\endcsname{\ensuremath{\widehat{\mathcal{#1}}}}}
\ddefloop ABCDEFGHIJKLMNOPQRSTUVWXYZ\ddefloop
\def\ddef#1{\expandafter\def\csname t#1\endcsname{\ensuremath{\widetilde{#1}}}}
\ddefloop ABCDEFGHIJKLMNOPQRSTUVWXYZ\ddefloop
\def\ddef#1{\expandafter\def\csname tc#1\endcsname{\ensuremath{\widetilde{\mathcal{#1}}}}}
\ddefloop ABCDEFGHIJKLMNOPQRSTUVWXYZ\ddefloop
\def\ddefloop#1{\ifx\ddefloop#1\else\ddef{#1}\expandafter\ddefloop\fi}
\def\ddef#1{\expandafter\def\csname scr#1\endcsname{\ensuremath{\mathscr{#1}}}}
\ddefloop ABCDEFGHIJKLMNOPQRSTUVWXYZ\ddefloop



\newcommand{\veps}{\varepsilon}

\newcommand{\ldef}{\vcentcolon=}

\newcommand{\vepsstat}{\veps_{\mathsf{stat}}}
\newcommand{\Ahat}{\emp A}
\newcommand{\bhat}{\emp b}
\newcommand{\ellhat}{\emp\ell}
\newcommand{\thetahat}{\emp\theta}
\newcommand{\phit}{\phi^\top}
\newcommand{\thetastar}{\theta^\star}

\newcommand\inv[1]{#1\raisebox{1.15ex}{$\scriptscriptstyle-\!1$}}
\newcommand{\Ainv}{\inv{A}{}}

\newcommand{\Qhat}{\wh{Q}}
\newcommand{\ihat}{\wh{i}}
\newcommand{\Tpih}[1][\pi]{\wh{\cT}^{\pi}}
\newcommand{\Tpi}[1][\pi]{\cT^{\pi}}
\newcommand{\Qpi}{Q^{\pi}}

\newcommand{\ghat}[1][\Qhat]{\wh g_{{#1}}}
\newcommand{\Cone}[1][\pi]{\cC^{#1}}
\newcommand{\Cinf}[1][\pi]{\cC_\infty^{#1}}

\newcommand{\vepssqs}{\veps_{\mathsf{reg}}^2}
\newcommand{\vepsobj}{\veps_{\mathsf{obj}}}

\renewcommand{\cite}{\citep}

\title{Model Selection for Off-policy Evaluation: \\ New Algorithms and Experimental Protocol}

%

\author{%
Pai Liu \\
UIUC\\
\And
Lingfeng Zhao \\
Columbia University \\
\And
Shivangi Agarwal \\
IIIT Delhi \\
\And
Jinghan Liu \\
USTC \\
\AND
Audrey Huang  \\
UIUC\\
\And
Philip Amortila \\
UIUC\\
\And
Nan Jiang\thanks{Correspondence to: \texttt{nanjiang@illinois.edu}.} \\
UIUC\\
}

\begin{document}

\maketitle

\begin{abstract}
Holdout validation and hyperparameter tuning 
from data is a long-standing problem in offline reinforcement learning (RL). A standard framework is to use off-policy evaluation (OPE) methods to evaluate and select between different policies, but OPE methods either incur exponential variance (e.g., importance sampling) or have hyperparameters of their own (e.g., FQE and model-based). We focus on model selection for OPE itself, which is even more under-investigated. 
Concretely, we select among candidate value functions (``model-free'') or dynamics models (``model-based'') to best assess the performance of a target policy. 
We develop: (1) new model-free and model-based selectors with theoretical guarantees, and (2) a new experimental protocol for empirically evaluating them. 
Compared to the model-free protocol in prior works, 
our new protocol allows for more stable generation and better control of candidate value functions in an optimization-free manner, and evaluation of model-free and model-based methods alike. We exemplify the protocol on Gym-Hopper, and find that our new model-free selector, \lstd, demonstrates promising empirical performance.  
\end{abstract}

\section{Introduction} \label{sec:intro}

Offline reinforcement learning (RL) is a promising paradigm for applying RL to important application domains where perfect simulators are not available and we must learn from data \cite{levine2020offline, jiang2024offline}. Despite the significant progress made in devising more performant \textit{training} algorithms, how to perform holdout validation and model selection
---an indispensable component of any practical machine learning pipeline---remains an open problem and has hindered the deployment of RL in real-life scenarios. 
Concretely, after multiple training algorithms (or instances of the same algorithm with different hyperparameter settings) have produced candidate policies, the \textit{primary} task (which contrasts the \textit{secondary} task which we focus on) is to select a good policy from these candidates, much like how we select a good classifier/regressor in supervised learning. To do so, we may estimate the performance (i.e., expected return) of each policy, and select the one with the highest estimated return. 

Unfortunately, estimating the performance of a new \textit{target} policy based on   data collected from a different \textit{behavior} policy is a highly challenging task, known as \textit{off-policy evaluation} (OPE). Popular OPE algorithms can be roughly divided into two categories, each with their own critical weaknesses: the first is importance sampling \cite{precup2000eligibility,jiang2016doubly,thomas2016data}, which has elegant unbiasedness guarantees but suffers variance that is \textit{exponential} in the horizon, limiting applicability beyond short-horizon settings such as contextual bandits \cite{li2011unbiased}. The second category includes algorithms such as Fitted-Q Evaluation (FQE) \cite{ernst2005tree, le2019batch, paine2020hyperparameter}, marginalized importance sampling \cite{liu2018breaking, nachum2019dualdice, uehara2019minimax}, and model-based approaches \cite{voloshin2021minimax}, which avoid the exponential variance; unfortunately, this comes at the cost of introducing their own hyperparameters (choice of neural architecture, learning rates, etc.). While prior works have reported the effectiveness of these methods \cite{paine2020hyperparameter}, they also leave a chicken-and-egg problem: \textbf{if these algorithms tune the hyperparameters of training, who tunes \textit{their} hyperparameters?} 

In this work, we make progress on the latter problem, namely model selection for OPE algorithms themselves. 
We lay out and study a dichotomy of two settings:
in the \textbf{model-based} setting, evaluation algorithms build dynamics models to evaluate a target policy. Given the uncertainty of hyperparameters in model learning, we assume that multiple candidate models are given, and the task is to select one that best evaluates the performance of the target policy. In the \textbf{model-free} setting, 
evaluation algorithms only output \textit{value functions} which we select from; see Figure~\ref{fig:pipeline} (left) for a visualization of the pipeline. 
%
Notably, these selection algorithms should be \textit{hyperparameter-free themselves}, to avoid further chicken-and-egg problems. 
Our contributions are: \vspace{-.5em}
\begin{enumerate}[leftmargin=*, itemsep=.5pt]
\item \textbf{New model-free selector (Section~\ref{sec:mf-select}):} We propose a new selection algorithm (or simply \textit{selector}), \lstd, for selecting between candidate value functions by approximately checking whether the function satisfies the Bellman equation. The key technical difficulty here is the infamous \textbf{\textit{double sampling}} problem \cite{baird1995residual, sutton2018reinforcement, chen2019information}, which typically requires additional function approximation (hence hyperparameters) to bypass. Our derivation builds on BVFT \cite{xie2020batch, zhang2021towards}, which is the only existing selector that addresses double sampling in a theoretically rigorous manner without additional function approximation assumptions. Our new selector 
enjoys better statistical rates ($n^{-1/2}$ vs.~$n^{-1/4}$) and empirically outperforms BVFT and other baselines.
\item \textbf{New model-based selectors (Section~\ref{sec:mb-select}):} When comparing candidate models, popular losses in model-based RL (e.g., $\ell_2$ loss on next-state prediction) often exhibit biases under stochastic transitions \cite{jiang2024note}. Instead, we propose novel estimators with theoretical guarantees, including novel adaptation of previous model-free selectors that require additional assumptions, where the assumptions are automatically satisfied using additional information in the model set \cite{antos2008learning, zitovsky2023revisiting}.  
\item \textbf{New experiment protocol (Section~\ref{sec:protocol})}: To empirically evaluate the selection algorithms, prior works often use FQE to prepare candidate Q-functions \cite{zhang2021towards, nie2022data}, which suffer from unstable training\footnote{For example, our preliminary investigation has found that FQE often diverges with CQL-trained policies \cite{kumar2020conservative}, which is echoed by \citet{nie2022data} in personal communications. \label{ft:diverge}} and lack of control in the quality of the candidate functions. We propose a new experiment protocol, where the candidate value functions are induced from variations of the groundtruth environment; see Figure~\ref{fig:pipeline} (right) for an illustration. This bypasses the caveats of FQE and allows for the computation of Q-values in an \textit{optimization-free} and controllable manner. Moreover, the protocol can also be used to evaluate and compare estimators for the model-based setting. Implementation-wise, we use \textbf{\textit{lazy evaluation}} and Monte-Carlo roll-outs to generate the needed Q-values.  Combined with parallelization and caching, we  reduce the computational cost and make the evaluation of new algorithms easier.
\item \textbf{Preliminary experiments (Section~\ref{sec:exp})}: We instantiate the protocol in Gym Hopper \cite{brockman2016openai} and demonstrate the various ways in which we can evaluate  and understand different selectors. 
\end{enumerate}

\begin{figure}[htbp]
    \centering
    \begin{minipage}[t]{.42\textwidth}
    Pipeline in Practice (Model-free) \\
    \vspace{.5em}
        \centering
        \begin{adjustbox}{valign=c}
        \includegraphics[width=\linewidth]{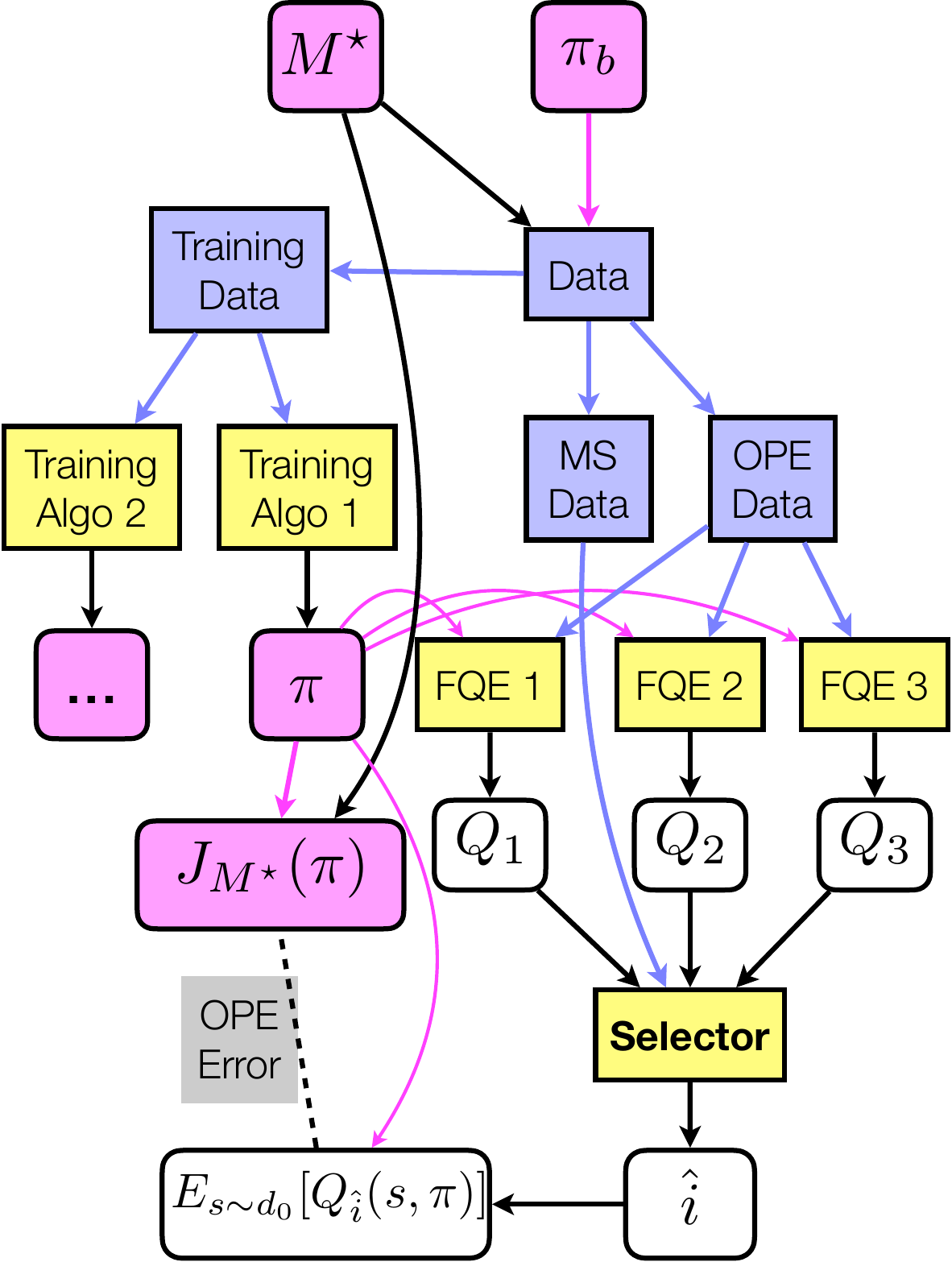}
        \end{adjustbox}
    \end{minipage}%
    \hfill
    \begin{minipage}[t]{.44\textwidth}
    Experiment Protocol (Model-free)  \\
    \vspace{.5em}
        \centering
        \begin{adjustbox}{valign=c}
        \includegraphics[width=\linewidth]{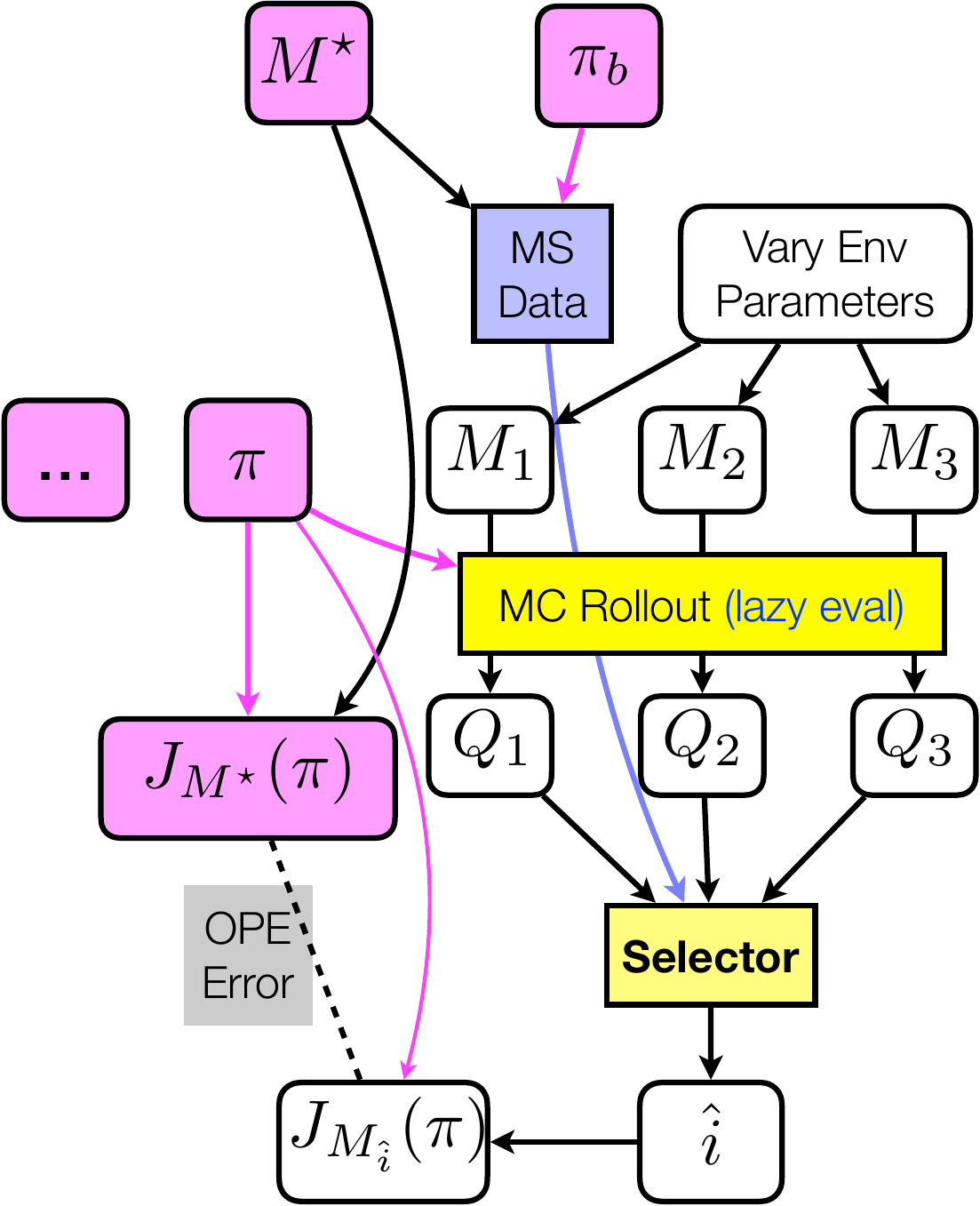}
        \end{adjustbox}
    \end{minipage}

    \caption{An illustration of the pipeline in practice that motivates our research (\textbf{left}) and our proposed experimental protocol (\textbf{right}), both for the model-free setting; the pipeline for the model-based setting is analogous and not visualized. \textbf{Left:} Training algorithms are run on training data to generate candidate policies, and choosing among them is the \textit{primary model-selection (MS) task} (see Section~\ref{sec:intro}) which is not the focus of our work. These policies (e.g., $\pi$) become the target policies in OPE (e.g., $\pi$), since accurate OPE can help solve the primary MS problem. Different FQE instances (e.g., with different hyperparameters, such as neural architectures) are used to approximate $Q^\pi$, producing $\{Q_i\}$. The \textbf{\sel} takes MS data and $\{Q_i\}$ as input and choose one of them to estimate $J(\pi) \equiv J_{M^\star}(\pi)$. \textbf{Right:} Illustration of our protocol for an experiment \textit{unit} (Section~\ref{sec:protocol}). The target policies $\pi$ can be learned from separate training data but can also be produced in other ways, such as training on inaccurate models $M_i$, which can be realistic for practical scenarios with inaccurate simulators. $\{M_i\}$ is prepared by varying environment parameters. Monte-Carlo rollouts are used to generate the Q-values for the data points in the MS data, which avoids potentially unstable optimization in the OPE pipeline; this is the source of stability and controllability compared to the prior protocol that mimics the practical pipeline. For further discussion on the limitation of our protocol and the trade-offs, see Section~\ref{sec:conclusion}. 
    \label{fig:pipeline}}
\end{figure}

\section{Preliminaries and Model Selection Problem} \label{sec:prelim}

\paragraph{Markov Decision Process (MDP)} 
An MDP is specified by $(\Scal, \Acal, P, R, \gamma, d_0)$, where $\Scal$ is the state space, $\Acal$ is the action space, $P: \Scal\times\Acal \to \Delta(\Scal)$ is the transition dynamics, $R: \Scal\times\Acal\to[0,\Rmax]$ is the reward function, $\gamma \in [0, 1)$ is the discount factor, and $d_0$ is the initial state distribution. A policy $\pi: \Scal\to\Delta(\Acal)$ induces a distribution over random trajectories, generated as $s_0 \sim d_0$, $a_t \sim \pi(\cdot|s_t)$, $r_t = R(s_t, a_t)$, $s_{t+1} \sim P(\cdot|s_t, a_t)$. We use $\Pr_\pi[\cdot]$ and $\EE_\pi[\cdot]$ to denote such a distribution and the expectation thereof. The performance of a policy is defined as $J(\pi):= \EE_{\pi}[\sum_{t=0}^\infty \gamma^t r_t]$, which is in the range of $[0, \Vmax]$ where $\Vmax:= \Rmax/(1-\gamma)$. 

\paragraph{Value Function and Bellman Operator} The Q-function $Q^\pi\in \RR^{\Scal\times\Acal}$ is the fixed point of $\Tcal^\pi: \RR^{\Scal\times\Acal} \to \RR^{\Scal\times\Acal}$, i.e., $Q^\pi = \Tcal^\pi Q^\pi$, where for any $f\in \RR^{\Scal\times\Acal}$, $(\Tcal^\pi f)(s,a):= R(s,a) + \gamma \EE_{s\sim P(\cdot|s,a)}[f(s',\pi)]$. We use the shorthand $f(s',\pi)$ for $\EE_{a'\sim \pi(\cdot|s')}[f(s',a')]$. 

\paragraph{Off-policy Evaluation (OPE)} OPE is about estimating the performance of a given \textit{target} policy $\pi$ in the real environment denoted as $M^\star = (\Scal, \Acal, P^\star, R, \gamma, d_0)$, namely $J_{M^\star}(\pi)$, using an offline dataset $\Dcal$ sampled from a behavior policy $\pi_b$. 
For simplicity, from now on we may drop the $M^\star$ in the subscript when referring to properties of $M^\star$, e.g., $J(\pi)\equiv J_{M^\star}(\pi)$, $Q^\pi \equiv Q_{M^\star}^\pi$, etc. 
As a standard simplification, our theoretical derivation assumes that the dataset $\Dcal$ consists of $n$ i.i.d.~tuples $(s,a,r,s')$ generated as $(s,a) \sim \data$, $r = R(s,a)$, $s' \sim P^\star(\cdot|s,a)$. We use $\EE_{\data}[\cdot]$ to denote the true expectation under the data distribution, and $\EE_{\Dcal}[\cdot]$ denotes the empirical approximation from $\Dcal$. 

\paragraph{Model Selection} We assume that there are multiple OPE \textit{instances} (e.g. OPE algorithms with different hyperparameters) that estimate $J(\pi)$, and our goal is to choose among them based on the offline dataset $\Dcal$; see Figure~\ref{fig:pipeline} for a visualization. 
Our setup is agnostic w.r.t.~the details of the OPE instances, and views them only through the Q-functions or the dynamics models they produce.\footnote{For alternative formulations such as selecting a function \textit{class} for the OPE algorithm, see Appendix~\ref{app:related}.} Concretely, two settings are considered:
\begin{itemize}[leftmargin=*]
\item \textbf{Model-free:} Each OPE instance outputs a Q-function. 
The validation task is to select $\hat Q$ from the candidate Q-functions $\Qcal:=\{Q_i\}_{i\in[m]}$,\footnote{In practical scenarios, $\{Q_i\}_{i\in[m]}$ and $\{M_i\}_{i\in[m]}$ may be learned from data, and we assume $\Dcal$ is a holdout dataset  independent of the data used for producing $\{M_i\}_{i\in[m]}$ and $\{Q_i\}_{i\in[m]}$.}  such that $\EE_{s\sim d_0}[{\hat Q}(s, \pi)] \approx J(\pi)$. We are interested in the regime where at least one of the OPE instances produces a reasonable approximation of $Q^\pi$, i.e., $\exists i, Q_i \approx Q^\pi$, while the quality of other candidates can be arbitrarily poor. We make the simplification assumption $Q^\pi \in\Qcal$ for theoretical derivations, and handling misspecification ($Q^\pi \notin \Qcal$) is routine in recent RL theory and omitted for presentation purposes \cite{amortila2023optimal,amortila2024mitigating}. We will also test our algorithms empirically in the misspecified case (Section~\ref{sec:subgrid}). 
\item \textbf{Model-based:} Each OPE instance 
outputs an MDP $M_i$ and uses $J_{M_i}(\pi)$ as an estimate of $J(\pi)$. W.l.o.g.~we assume $M_i$ only differs from $M^\star$ in the transition $P_i$, as handling different reward functions is straightforward.  The  task is to select $\hat M$ from $\Mcal:=\{M_i\}_{i\in[m]}$, such that $J_{\hat M}(\pi) \approx J(\pi)$. Similar to the model-free case, we assume $M^\star \in \Mcal$ in the derivations.
\end{itemize}

The model-free algorithms have wider applicability, especially when we lack structural knowledge of the dynamics. A model-free algorithm can always be applied to the model-based setting: any $\{M_1, \ldots, M_m\}$ induces a Q-function class $\{Q_{M_1}^\pi, \ldots, Q_{M_m}^\pi\}$ which can be fed into a model-free algorithm (as we demonstrate in our protocol; see Section~\ref{sec:protocol}), and $M^\star \in \Mcal$ implies $Q^\pi = Q_{M^\star}^\pi \in \Qcal$. 


\section{New Model-Free Selector} \label{sec:mf-select}

In this section we introduce our new model-free selector, \lstd. To start, we review the difficulties in model-free selection and the idea behind BVFT \cite{xie2020batch, zhang2021towards} which we build on. 

\subsection{Challenges of Model-free Selection and BVFT}
To select $Q^\pi$ from $\Qcal=\{Q_1, \ldots, Q_m\}$, perhaps the most natural idea is to check how much each candidate function $Q_i$ violates the Bellman equation $Q^\pi = \Tcal^\pi Q^\pi$, and choose the function that minimizes such a violation. This motivates the Bellman error (or residual) objective:
\begin{equation}\label{eq:berr}
    \EE_{\data}[(Q_i - \Tcal^\pi Q_i)^2].
\end{equation}
Unfortunately, this loss cannot be estimated due to the infamous \textit{double-sampling problem} \cite{baird1995residual, sutton2018reinforcement, chen2019information}, and the na\"ive estimation, which squares the TD error, is a biased estimation of the Bellman error (Eq.\eqref{eq:berr}) in stochastic environments:
\begin{align} \label{eq:td-sq}
\textrm{(TD-sq)} \qquad \EE_{\data}[(Q_i(s,a) - r - \gamma Q_i(s',\pi))^2].
\end{align} 
Common approaches to debiasing this objective involve additional ``helper'' classes (e.g., $\Gcal$ in Section~\ref{sec:mb-select}), making them not hyperparameter-free. See also Appendix~\ref{app:related} for a detailed discussion of how existing works tackle this difficulty under different formulations and often stronger assumptions.

\paragraph{BVFT} 
The idea behind BVFT \cite{xie2020batch} is to find an OPE algorithm for learning $Q^\pi$ from a function class $\Fcal$, such that to achieve polynomial sample-complexity guarantees, it suffices if $\Fcal$ satisfies 2 assumptions:
(1) Realizability, that $Q^\pi \in \Fcal$. 
(2) Some \textit{structural} (as opposed to \textit{expressivity}) assumption on $\Fcal$, e.g., smoothness, linearity, etc. 
Standard learning results in RL typically require stronger expressivity assumption than realizability, such as the widely adopted \textit{Bellman-completeness} assumption ($\Tcal^\pi f\in\Fcal, \forall f\in\Fcal$). However, exceptions exist, and 
BVFT shows that \textit{they can be converted into a pairwise-comparison subroutine} for selecting between two candidates $\{Q_i, Q_j\}$, and extension to multiple candidates can be done via a tournament procedure. Crucially, we can use $\{Q_i, Q_j\}$ to \textbf{automatically create an $\Fcal$ needed by the algorithm} without additional side information or prior knowledge. 
We will demonstrate such a process in the next subsection. 
%

In short, BVFT provides a general recipe for converting a special kind of ``base'' OPE methods into selectors with favorable guarantees. Intuitively, the ``base'' method/analysis will determine the properties of the resulting selector. 
The ``base'' of BVFT is $Q^\pi$-irrelevant abstractions \cite{li2006towards, nan_abstraction_notes}, where the structural assumption on $\Fcal$ is being piecewise-constant. Our novel insight is that for learning $Q^\pi$, there exists another algorithm, LSTDQ \cite{lagoudakis2003least}, which satisfies the needed criteria and has superior properties compared to $Q^\pi$-irrelevant abstractions, thus can induce better selectors than BVFT. 

\subsection{\lstd} 
We now provide a theoretical analysis of LSTDQ (which is simplified from the literature \cite{mou2023optimal,perdomo2023complete}), and show how to transform it into a selector via the BVFT recipe. 
In LSTDQ, we learn $Q^\pi$ via linear function approximation, i.e., it is assumed that a feature map $\phi: \Scal\times\Acal \to \RR^d$ is given, such that $Q^\pi(s,a) = \phi(s,a)^\top \theta^\star$, 
where $\theta^\star\in\RR^d$ is the groundtruth linear coefficient. Equivalently, this asserts that the induced linear class, $\Fcal_\phi := \{ \phi(\cdot)^\top\theta: \theta \in \RR^d\}$ satisfies realizability, $Q^\pi \in \Fcal_\phi$. 

LSTDQ provides a closed-form estimation of $\theta^\star$ by first estimating the following moment matrices:
\begin{align} \label{eq:lstd-sig}
\Sigma := \EE_{\data}[\phi(s,a)\phi(s,a)^\top], 
\quad & \Sigcr := \EE_{\data}[\phi(s,a)\phi(s',\pi)^\top], \\ 
A := \Sigma - \gamma \Sigcr, \quad & b := \EE_{\data}[\phi(s,a) r]. \label{eq:lstd-a-b}
\end{align}
As a simple algebraic fact, $A \theta^\star = b$. Therefore, when $A$ is invertible, we immediately have that $\theta^\star = A^{-1} b$. The LSTDQ algorithm thus simply estimates $A$ and $b$ from data, denoted as $\emp A$ and $\emp b$, respectively, and estimate $\theta^\star$ as $\emp A^{-1} \emp b$. Alternatively, for any candidate $\theta$, $\|A \theta - b\|_\infty$ 
can serve as a loss function that measures the violation of the equation $A\theta^\star = b$, which we can minimize over.
Its finite-sample guarantee is given below. All proofs of the paper can be found in Appendix~\ref{app:proof}.


\begin{theorem} \label{thm:lstd}
Let $\Theta \subset \RR^d$ be a set of parameters such that $\theta^\star \in \Theta$. Assume $\max_{s,a} \nrm{\phi(s,a)}_2 \leq B_\phi$ and $\max_{\theta \in \Theta} \nrm{\theta}_2 \leq 1$. Let $\emp\theta := \argmin_{\theta\in\Theta} \|\emp A \theta - \emp b\|_\infty$. Then, with probability at least $1-\delta$, 
\begin{align}
\|Q^\pi - \hat{\phi}^\top \theta\|_\infty \le \frac{6\max\{\Rmax, B_{\phi}\}^2}{\sigma_{\min}(A)}  \sqrt{\frac{d\log(2d\abs{\Theta}/\delta)}{n}},
\end{align}
where $\sigma_{\min}(\cdot)$ is the smallest singular value. 
\end{theorem}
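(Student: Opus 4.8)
The plan is to collapse the whole statement onto a single deviation bound for the empirical moment equation and then invert $A$. First I would use realizability: since $Q^\pi = \phi^\top\thetastar$, we have $Q^\pi(s,a) - \phi(s,a)^\top\thetahat = \phi(s,a)^\top(\thetastar - \thetahat)$, so by Cauchy--Schwarz and $\nrm{\phi(s,a)}_2 \le B_\phi$ it suffices to control $\nrm{\thetahat - \thetastar}_2$. Here the algebraic fact $A\thetastar = b$ (noted just before the theorem) is the lever: it gives $A(\thetahat - \thetastar) = A\thetahat - b$, and for any matrix $\nrm{A v}_2 \ge \sigma_{\min}(A)\nrm{v}_2$, so $\nrm{\thetahat - \thetastar}_2 \le \sigma_{\min}(A)^{-1}\nrm{A\thetahat - b}_2 \le \sigma_{\min}(A)^{-1}\sqrt{d}\,\nrm{A\thetahat - b}_\infty$. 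The conversion $\nrm{\cdot}_2 \le \sqrt{d}\,\nrm{\cdot}_\infty$ is what injects the $\sqrt d$ into the final rate. Everything thus reduces to bounding $\nrm{A\thetahat - b}_\infty$.

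The crux is a uniform concentration bound. The key observation is that $\Ahat\theta - \bhat = \EE_{\Dcal}[\phi(s,a)(\phi(s,a)^\top\theta - \gamma\phi(s',\pi)^\top\theta - r)]$ is an honest empirical average of i.i.d.\ vectors (the feature-weighted TD errors) whose population mean is exactly $A\theta - b$. This is precisely how LSTDQ sidesteps the double-sampling problem: there is no ratio of samples and no squared Bellman backup, so unlike (TD-sq) the empirical quantity is unbiased and elementary Hoeffding applies. Bounding each summand's coordinate in magnitude by $O(\max\{\Rmax, B_\phi\}^2)$ — using $\abs{\phi_j} \le B_\phi$, $\abs{\phi^\top\theta} \le B_\phi$ (from $\nrm{\theta}_2 \le 1$), and $\abs{r} \le \Rmax$ — a two-sided Hoeffding together with a union bound over the $d$ coordinates and the $\abs{\Theta}$ parameters yields, with probability $1-\delta$, the uniform guarantee $\sup_{\theta \in \Theta}\nrm{(\Ahat\theta - \bhat) - (A\theta - b)}_\infty \le \vepsstat$, where $\vepsstat = O(\max\{\Rmax, B_\phi\}^2 \sqrt{\log(2d\abs{\Theta}/\delta)/n})$. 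Finiteness of $\Theta$ is what makes this union bound clean, and uniformity is essential because $\thetahat$ is data-dependent.

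Combining is then routine. At $\theta = \thetastar$ we have $A\thetastar - b = 0$, so the uniform bound gives $\nrm{\Ahat\thetastar - \bhat}_\infty \le \vepsstat$; optimality of $\thetahat$ (it minimizes $\nrm{\Ahat\theta - \bhat}_\infty$ over $\Theta \ni \thetastar$) then forces $\nrm{\Ahat\thetahat - \bhat}_\infty \le \vepsstat$. A triangle inequality $\nrm{A\thetahat - b}_\infty \le \nrm{(A\thetahat - b) - (\Ahat\thetahat - \bhat)}_\infty + \nrm{\Ahat\thetahat - \bhat}_\infty \le 2\vepsstat$ closes it, where the first term is controlled by the uniform bound applied at the random point $\thetahat$. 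Chaining back through the two reductions gives $\nrm{Q^\pi - \phi^\top\thetahat}_\infty \le 2 B_\phi \sqrt{d}\,\sigma_{\min}(A)^{-1}\vepsstat$, which matches the stated expression after collecting constants.

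The main obstacle is the concentration step, but conceptually it is easy precisely because of the choice to work with the linear moment equation $A\theta = b$ rather than the squared Bellman residual: the estimand is a plain sample mean, so no helper class or variance-debiasing is needed. The only real care required is bookkeeping — tracking the range of the per-sample summand tightly enough to land the constant $6\max\{\Rmax,B_\phi\}^2$, and making sure the deviation bound is proved \emph{uniformly} over the finite set $\Theta$ so that it may legitimately be evaluated at the data-dependent minimizer $\thetahat$ (a pointwise bound would not suffice). A direct $\ell_2$ (rather than coordinatewise $\ell_\infty$) concentration could in principle shave the $\sqrt d$, but it is unnecessary for the claimed rate.
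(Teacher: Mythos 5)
Your proposal is correct and follows essentially the same route as the paper's proof: the same reduction of $\|Q^\pi - \phi^\top\thetahat\|_\infty$ to $\sqrt{d}\,B_\phi\,\sigma_{\min}(A)^{-1}\|A\thetahat - b\|_\infty$ (the paper inserts $A^{-1}A$ before applying Cauchy--Schwarz rather than passing through $\|\thetahat-\thetastar\|_2$, which is algebraically identical), the same coordinatewise Hoeffding bound with a union bound over the $d$ coordinates and the finite set $\Theta$, and the same optimality-plus-triangle-inequality argument exploiting $A\thetastar = b$.
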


Besides the realizability of $Q^\pi$, the guarantee also depends on the invertibility of $A$, which can be viewed as a coverage condition, since $A$ changes with the data distribution $\data$ \cite{amortila2020variant,amortila2023optimal,jiang2024offline}. In fact, in the on-policy setting, 
$\sigma_{\min}(A)$ can be shown to be lower-bounded away from $0$ \cite{mou2023optimal}; see Appendix~\ref{app:bvft-compare}. 

\paragraph{\lstd} We are now ready to describe our new selector. Recall that we first deal with the case of two candidate functions, $\{Q_i, Q_j\}$, where $Q^\pi \in \{Q_i, Q_j\}$. To apply the LSTDQ algorithm and guarantee, all we need is to create the feature map $\phi$ such that $Q^\pi$ is linearly realizable in $\phi$. In the spirit of BVFT, we design the feature map as
\begin{align} \label{eq:feature}
\phi_{i,j}(s,a):= [Q_i(s,a), Q_j(s,a)]^\top.
\end{align}
The subscript ``$i,j$'' makes it clear that the feature is created based on $Q_i$ and $Q_j$ as candidates, and we will use similar conventions for all quantities induced from $\phi_{i,j}$, e.g., $A_{i,j}, b_{i,j}$, etc. Obviously, $Q^\pi$ is linear in $\phi_{i,j}$ with $\theta^\star \in \{[1, 0]^\top, [0, 1]^\top\}$. Therefore, to choose between $Q_i$ and $Q_j$, we can calculate the LSTDQ loss of $[1, 0]^\top$ and $[0, 1]^\top$ under feature $\phi_{i,j}$ and choose the one with smaller loss. For $\theta = [1, 0]^\top$,  we have $A_{i,j} \theta - b_{i,j} =$
\begin{align}
&~ \EE_{\data}\Big\{\vc{Q_i(s,a)}{Q_j(s,a)} ([Q_i(s,a), Q_j(s,a)]  - \gamma [Q_i(s',\pi), Q_j(s',\pi)])\Big\} \vc{1}{0} \\
&~  -  \EE_{\data}\big[[Q_i(s,a), Q_j(s,a)] \cdot r \big] \vc{1}{0} 
=  \EE_{\data}\left[\vc{Q_i(s,a)}{Q_j(s,a)} (Q_i(s,a) - r - \gamma Q_i(s',\pi)) \right].
\end{align}
Taking the infinity-norm of the loss vector, we have 
\begin{align}
\|A_{i,j} \vc{1}{0} - b_{i,j}\|_\infty = \max_{k \in \{i,j\}} |\EE_{\data}[Q_k(s,a) (Q_i(s,a) - r - \gamma Q_i(s',\pi))]|.
\end{align}
The loss for $\theta = [0, 1]^\top$ is similar, where $Q_i$ is replaced by $Q_j$. Following BVFT, we can generalize the procedure to $m$ candidate functions $\{Q_1, \ldots, Q_m\}$ by pairwise comparison and recording the worst-case loss, this leads to our final loss function: 
\begin{align}\label{eq:tour}
\Lcal(Q_i; \{Q_j\}_{j\in[m]}, \pi):=\max_{k \in [m]} |\EE_{\data}[Q_k(s,a) (Q_i(s,a) - r - \gamma Q_i(s',\pi))]|.
\end{align}
The actual algorithm replaces $\EE_{\data}$ with the empirical estimation from data, and chooses the $Q_i$ that minimizes the loss. Building on Theorem~\ref{thm:lstd}, we have the following guarantee:

\begin{theorem}\label{thm:tournament}
Given $Q^\pi\coloneqq Q_{i^\star} \in \{Q_i\}_{i\in[m]}$, the $Q_{\emp i}$ that minimizes the empirical estimation of $\Lcal(Q_i; \{Q_j\}_{j\in[m]}, \pi)$ (Eq.\eqref{eq:tour}) satisfies that w.p.~$\ge 1-\delta$, 
\begin{align}
|J(\pi) - \EE_{s\sim d_0}[Q_{\emp i}(s, \pi)| \le\max_{i \in [m] \setminus \crl{i^\star}}\frac{ 24 \Vmax^3}{\sigma_{\min}(A_{i,i^\star})}\sqrt{\frac{\log(8m/\delta)}{n}}.
\end{align}
\end{theorem}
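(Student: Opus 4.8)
The plan is to reduce the $m$-way selection guarantee to a \emph{pairwise} application of the matrix-inversion argument underlying Theorem~\ref{thm:lstd}, on the two-dimensional feature $\phi_{i,i^\star}$, after first controlling the sampling error of the empirical loss. I would proceed in four steps: (i) uniform concentration of the empirical cross-moments; (ii) show the true index $i^\star$ has (near) zero loss, so the selected $\emp i$ inherits a small population loss; (iii) convert this small loss into a sup-norm bound on $\|Q^\pi - Q_{\emp i}\|_\infty$; and (iv) pass to the value error via $|J(\pi)-\EE_{s\sim d_0}[Q_{\emp i}(s,\pi)]| = |\EE_{s\sim d_0}[(Q^\pi-Q_{\emp i})(s,\pi)]| \le \|Q^\pi - Q_{\emp i}\|_\infty$.

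For step (i), write $g_{i,k} := \EE_\data[Q_k(s,a)(Q_i(s,a) - r - \gamma Q_i(s',\pi))]$, so that $\Lcal(Q_i;\cdot) = \max_k |g_{i,k}|$. The crucial point that sidesteps double sampling is that the Bellman residual enters $g_{i,k}$ only \emph{linearly} (it is multiplied by $Q_k(s,a)$, which depends on $(s,a)$ alone), so the empirical average $\emp g_{i,k}$ is an \emph{unbiased} estimate of $g_{i,k}$. Each summand is bounded in magnitude by $2\Vmax^2$, so Hoeffding's inequality plus a union bound over the $O(m)$ relevant index pairs (namely $(i,i)$ and $(i,i^\star)$ for all $i$, and $(i^\star,k)$ for all $k$, which suffices to handle the data-dependent $\emp i$) gives, with probability $\ge 1-\delta$, $|\emp g_{i,k} - g_{i,k}| \le \epsilon_n$ for all these pairs, with $\epsilon_n = O(\Vmax^2\sqrt{\log(m/\delta)/n})$.

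For steps (ii)--(iii): since $Q_{i^\star} = Q^\pi$ satisfies $Q^\pi = \Tcal^\pi Q^\pi$, taking the conditional expectation over $(r,s')$ shows $g_{i^\star,k} = \EE_\data[Q_k\,(Q^\pi - \Tcal^\pi Q^\pi)] = 0$ for every $k$, hence $\Lcal(Q_{i^\star};\cdot)=0$ and $\emp\Lcal(Q_{i^\star};\cdot)\le\epsilon_n$. As $\emp i$ minimizes the empirical loss, $\emp\Lcal(Q_{\emp i};\cdot)\le\epsilon_n$, which in particular controls the two coordinates $|\emp g_{\emp i,\emp i}|, |\emp g_{\emp i, i^\star}| \le \epsilon_n$. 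Now form the pair feature $\phi_{\emp i, i^\star} = [Q_{\emp i}, Q_{i^\star}]^\top$ with moments $A_{\emp i, i^\star}, b_{\emp i, i^\star}$. By the algebraic identity $A\theta^\star = b$ (valid since $Q^\pi = \phi_{\emp i,i^\star}^\top\theta^\star$ with $\theta^\star = [0,1]^\top$), the population pairwise loss of the candidate $\theta = [1,0]^\top$ (which realizes $Q_{\emp i}$) is $\max(|g_{\emp i,\emp i}|,|g_{\emp i,i^\star}|) = \|A_{\emp i,i^\star}(\theta - \theta^\star)\|_\infty \le 2\epsilon_n$ (empirical $\le\epsilon_n$, plus concentration $\epsilon_n$). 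Writing $v := \theta - \theta^\star = [1,-1]^\top$ and using $\sigma_{\min}(A_{\emp i,i^\star})\,\|v\|_2 \le \|A_{\emp i,i^\star} v\|_2 \le \sqrt2\,\|A_{\emp i,i^\star}v\|_\infty$, we obtain $\|v\|_2 \le 2\sqrt2\,\epsilon_n/\sigma_{\min}(A_{\emp i,i^\star})$, and therefore, with $B_\phi = \sqrt2\,\Vmax$, $\|Q^\pi - Q_{\emp i}\|_\infty = \|\phi_{\emp i,i^\star}^\top v\|_\infty \le B_\phi\,\|v\|_2 \le 4\Vmax\,\epsilon_n/\sigma_{\min}(A_{\emp i,i^\star})$.

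Combining step (iv) with $\epsilon_n = O(\Vmax^2\sqrt{\log(m/\delta)/n})$ yields the claimed $\Vmax^3/\sigma_{\min}$ rate; since $\emp i$ is data-dependent I would upper bound $\sigma_{\min}(A_{\emp i, i^\star})^{-1}$ by $\max_{i\ne i^\star}\sigma_{\min}(A_{i,i^\star})^{-1}$ (the case $\emp i = i^\star$ is trivial), which is exactly the $\max$ in the statement; what remains is only to track the constant (the factor $24$) and the precise union-bound count (the $\log(8m/\delta)$). The conceptually delicate point — and the main obstacle — is that the analyzed candidate $\theta = [1,0]^\top$ does \emph{not} converge to $\theta^\star$: the ``parameter error'' $\|v\|_2 = \sqrt2$ is fixed, so unlike the consistent LSTDQ estimator in Theorem~\ref{thm:lstd}, the inequality $\|v\|_2 \le \|A_{\emp i,i^\star} v\|_2/\sigma_{\min}(A_{\emp i,i^\star})$ is used purely as a valid (possibly loose) upper bound. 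This is precisely why the guarantee must scale as $1/\sigma_{\min}(A_{i,i^\star})$ and is non-vacuous only when every competing pair is well-conditioned, i.e.\ each $Q_i$ is sufficiently Bellman-distinguishable from $Q^\pi$ under $\data$. Making the concentration event uniform over the random $\emp i$ is the other point requiring care, and it is already secured by the union bound in step (i).
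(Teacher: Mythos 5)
Your proposal is correct and follows essentially the same route as the paper's proof: union-bound concentration over the $O(m)$ moment pairs involving $i^\star$, exploiting that $\Lcal(Q_{i^\star};\cdot)=0$ (the linearity-in-the-residual observation that kills double sampling) together with the optimality of $\emp i$ to bound the population pairwise loss $\|A_{\emp i,i^\star}e_1 - b_{\emp i,i^\star}\|_\infty \le 2\vepsstat$, then converting to $\|Q^\pi - Q_{\emp i}\|_\infty$ via $\sigma_{\min}(A_{\emp i,i^\star})$ and finishing with $|J(\pi)-\EE_{d_0}[Q_{\emp i}(s,\pi)]| \le \|Q^\pi-Q_{\emp i}\|_\infty$. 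Your rearrangement of the linear algebra (bounding $\|\theta-\theta^\star\|_2$ first rather than inserting $A^{-1}A$ as the paper does) is equivalent, and your observation that the argument works as a one-sided bound even though $e_1$ never converges to $\theta^\star$ is a fair reading of the same mechanism the paper uses.
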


\paragraph{Comparison to BVFT \cite{xie2020batch}} BVFT's guarantee has a slow $n^{-1/4}$ rate for OPE \cite{zhang2021towards, jia2024offline}, whereas our method enjoys the standard $n^{-1/2}$ rate. The difference is due to an adaptive discretization step in BVFT, which also makes its implementation somewhat complicated as the resolution needs to be heuristically chosen. By comparison, the implementation of \lstd is simple and straightforward. Both methods inherit the coverage assumptions from their base algorithms and are not directly comparable, and a nuanced discussion on this issue can be found in Appendix \ref{app:bvft-compare}.

\paragraph{Variants} A key step in the derivation is to design the linearly realizable feature of Eq.\eqref{eq:feature}, but the design is not unique as any non-degenerate linear transformation would also suffice. For example, we can use $\phi_{i,j} = [Q_i/c_i, (Q_j-Q_i)/c_{j,i}]$; the ``diff-of-value'' term $Q_j -Q_i$ has shown improved numerical properties in practice \cite{kumar2020conservative, cheng2022adversarially}, and $c_i$, $c_{j,i}$ can normalize the discriminators to unit variance for further numerical stability; this will also be the version we use in the main-text experiments. 
Empirical comparison across these variants can be found in  Appendix~\ref{app:lstd}.

\section{New Model-Based Selectors} \label{sec:mb-select}

We now consider the model-based setting, i.e., choosing a model from $\{M_i\}_{i\in[m]}$ such that $J_{\hat{M}}(\pi) \approx J(\pi)$. 
This can be practically relevant when we have structural knowledge of the system dynamics and can build reasonable simulators, but simulators of complex real-world systems will likely have many design choices and knobs that cannot be set from prior knowledge alone. As alluded to at the end of Section~\ref{sec:prelim}, model-free methods can always be applied to the model-based setting by letting $\Qcal := \{Q^\pi_{M_i}\}_{i\in[m]}$. The question is: \textit{are there methods that leverage the additional side information in $\Mcal$ (that is not in $\Qcal$) to outperform model-free methods? }

We study this question by developing algorithms with provable guarantees that can only be run with $\Mcal$ but not $\Qcal$ (which means they must be using additional information), and include them in the empirical comparisons in Section~\ref{sec:exp}. To our surprise, however, these algorithms are outperformed by \lstd, which is simpler, computationally more efficient, and more widely applicable. Despite this, our model-based development produces novel theoretical results and can be of independent interest, and also provides additional baselines for \lstd in empirical comparison. We briefly describe the studied methods below, with details deferred to Appendix~\ref{app:model-based}. 

\begin{enumerate}[leftmargin=*]
\item \textbf{Na\"ive Baseline.} A natural method is to use the model-prediction loss: any model $M$ is scored by 
$\EE_{(s,a,s')\sim \data, \tilde s \sim P(\cdot|s,a)}[\|s' - \tilde s\|]$, where $P$ is the dynamics of $M$, and $\|\cdot\|$ is some norm (e.g., $\ell_2$ norm) that measures the distance between states. Despite its wide use in the model-based RL literature, the loss has a crucial caveat that \textbf{it exhibits a bias towards more deterministic systems}, which is problematic when the groundtruth environment is stochastic. We will see empirical evidence of this in Section~\ref{sec:exp} (see ``mb\_naive'' in Figures~\ref{fig:mainfigure} and \ref{fig:mb}). 
\item \textbf{Regression-based \Sel (Appendix~\ref{app:antos}).} Recall from Section~\ref{sec:mf-select} that a main difficulty in directly estimating the Bellman error is the lack of access to $\Tcal^\pi$. \citet{antos2008learning} suggest that $\Tcal^\pi f$ for any $f$ can be learned via regression  $\Tcal^\pi f \in \argmin_{g \in \Gcal} \EE_{\data}[(g(s,a) - r - \gamma f(s',\pi))^2]$, provided that $
\Tcal^\pi f \in \Gcal$. \citet{zitovsky2023revisiting} apply this idea to the model selection problem and uses a user-provided $\Gcal$, which requires additional hyperparameters as a model-free selector. However, in Appendix~\ref{app:antos} we show that the $\Gcal$ class can be constructed \textit{automatically} from $\Mcal$, making it a  strong baseline (due to its use of additional information in $\Mcal$) for our \lstd in the empirical comparison (``mb\_Zitovsky et al.'' and ``mb\_Antos et al'' in Figure~\ref{fig:mb}). 
\item \textbf{Sign-flip \Sel (Appendix~\ref{app:saber}).} We also develop a novel \sel that takes the spirit of regression-based selector, but replace its squared loss with absolute value. It comes with new theoretical guarantee (Theorem~\ref{thm:signed}), and is implemented in the experiments (``mb\_sign\_flip''). 
\end{enumerate}

\section{A Model-Based Experiment Protocol} \label{sec:protocol}

Given the new \sels, we would like to evaluate and compare them empirically. However, as alluded to in the introduction, current experiment protocols have various caveats and make it difficult to evaluate the estimators in well-controlled settings. 
In this section, we describe a novel model-based experiment protocol, which can be used to evaluate both model-based and model-free \sels.  

\subsection{The Protocol} 
Our protocol consists of experiment textit{units} defined by the following elements (see Figure~\ref{fig:pipeline}R):  \vspace{-.2em}
\begin{enumerate}[itemsep=.15pt]
\item Groundtruth model $M^\star$.
\item Candidate model list $\Mcal  = \{M_i\}_{i\in[m]}$.  
\item Behavior policy $\pi_b$ and offline sample size $n$.
\item Target policies $\Pi = \{\pi_1, \ldots, \pi_l\}$.
\end{enumerate} \vspace*{-.2em}
Given the specification of a unit, we will draw a dataset of size $n$ from $M^\star$ using behavior policy $\pi_b$. For each target policy $\pi \in \Pi$, we apply different selectors to choose a model $M\in \Mcal$ to evaluate $\pi$. 
Model-free algorithms will access $M$ only through its Q-function, $Q_M^\pi$, effectively choosing from the set $\Qcal=\{Q_M^\pi: M\in\Mcal\}$. Finally, the  prediction error $|J_M(\pi) - J_{M^\star}(\pi)|$ is recorded and averaged over the target policies in $\Pi$. 
Moreover, we may gather results from multiple units that share the same $M^\star$ but differ in $\Mcal$ and/or the behavior policy to investigate issues such as robustness to misspecification and data coverage, as we will demonstrate in the next section. 

\paragraph{Lazy Evaluation of Q-values via Monte Carlo} While the pipeline is conceptually straightforward, practically accessing the Q-function $Q_M^\pi$ is nontrivial: we could run TD-style algorithms in  $M$ to learn $Q_M^\pi$, but that invokes a separate RL algorithm that may require additional tuning and verification, and it can be difficult to control the quality of the learned function. 

Our innovation here is to note that, for all the model-free algorithms we are interested in evaluating, \textbf{they all access $Q_M^\pi$ exclusively through the value of $Q_M^\pi(s,a)$ and $Q_M^\pi(s',\pi)$ for $(s,a,r,s')$ in the offline dataset $\Dcal$}. That is, given $n$ data points in $\Dcal$, we only need to know $2n$ scalar values about $Q_M^\pi$. Therefore, we propose to directly compute these values without explicitly representing $Q_M^\pi$, and each value can be easily estimated by averaging over multiple Monte-Carlo rollouts \citep{sajed2018high}, i.e., $Q_M^\pi(s,a) = \EE_{\pi}[\sum_{t=0}^\infty \gamma^{t} r_t | s_0 = s, a_0 = a]$ can be approximated by rolling out multiple trajectories starting from $(s,a)$ and taking actions according to $\pi$.
For the model-based estimators proposed in Section~\ref{sec:mb-select}, we need access to quantities in the form of $(\Tcal_{M_j}^\pi Q_{M_i}^\pi)(s,a)$ (see Appendix~\ref{app:model-based}). This value can also be obtained by Monte-Carlo simulation: (1) start in $(s,a)$ and simulate one step in $M_j$, then (2) switch to $M_i$, simulate from step 2 onwards and rollout the rest of the trajectory. 


\subsection{Computational Efficiency} \label{sec:computation}
Despite not involving neural-net optimization, the experiment can still be computationally intensive due to rolling out a large number of trajectories. In our code, we incorporate the following measures to reduce the computational cost:

\paragraph{Q-caching} The most intensive computation is to roll-out Monte-Carlo trajectories for Q-value estimation; the cost of running the actual selection algorithms is often much lower and negligible. Hence, we generate these MC Q-estimates and save them to files, and retrieve them during selection. This makes it efficient to experiment with new selection algorithms or add extra baselines, and also enables fast experiment that involves a subset of the candidate models (see Section~\ref{sec:subgrid}). 

\paragraph{Bootstrapping} To account for the randomness of $\Dcal$, we use bootstrapping to sample (with replacement) multiple datasets, and report an algorithm's mean performance across these bootstrapped samples with 95\% confidence intervals. Using bootstrapping maximally reuses the cached Q-values and avoids the high computational costs of sampling multiple datasets and performing Q-caching in each of them, which is unavoidable if we were to repeat each experiment verbatim multiple times. 

\begin{figure}[t]
	\centering
	\begin{minipage}{0.28\linewidth}
		\centering
		\includegraphics[width=\linewidth]{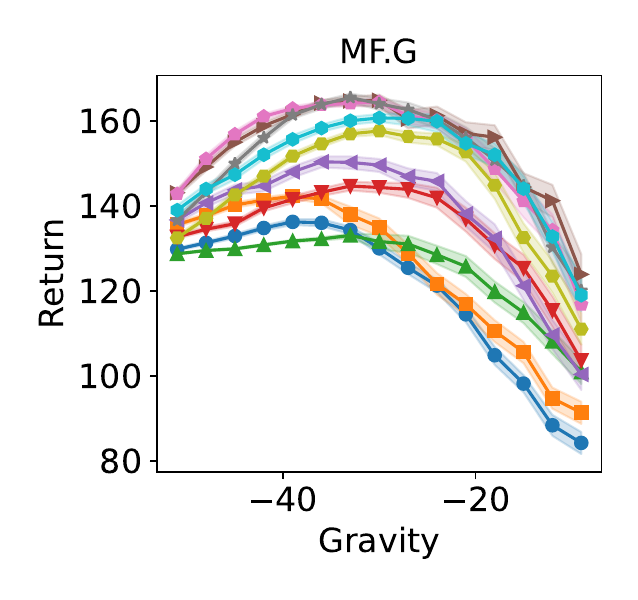}
	\end{minipage} \hspace{.5em}
	\begin{minipage}{0.28\linewidth}
		\centering
		\includegraphics[width=\linewidth]{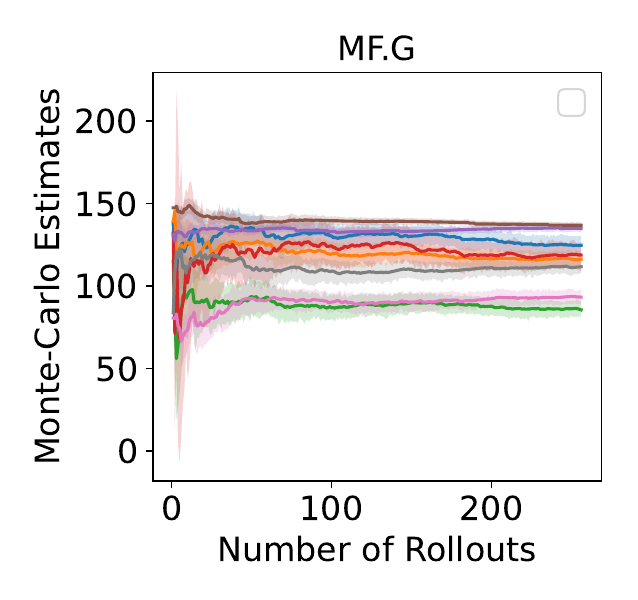}
	\end{minipage} \hspace{1em}
    \begin{minipage}{0.3\linewidth}
    \caption{\textbf{Left:} $J_M(\pi)$ in $M\in\Mcal_\grav$ (cf. Section \ref{sec:exp-main}) for different target policies. \textbf{Right:} Convergence of Monte-Carlo estimates of $J(\pi)$. Each curve corresponds to a target policy. 
    \label{fig:sanity}}
		\end{minipage}
\end{figure}

\section{Exemplification of the Protocol} \label{sec:exp}

In this section we instantiate our protocol in the Gym Hopper environment to demonstrate its utility, while also providing preliminary empirical results for our algorithms. Our code is  available at \url{https://github.com/Coder-PAI/2025_neurips_model_selection_rl.git}. 

\begin{figure*}[b]
    \centering
    \begin{minipage}{0.7\textwidth}
        \includegraphics[width=.95\linewidth]{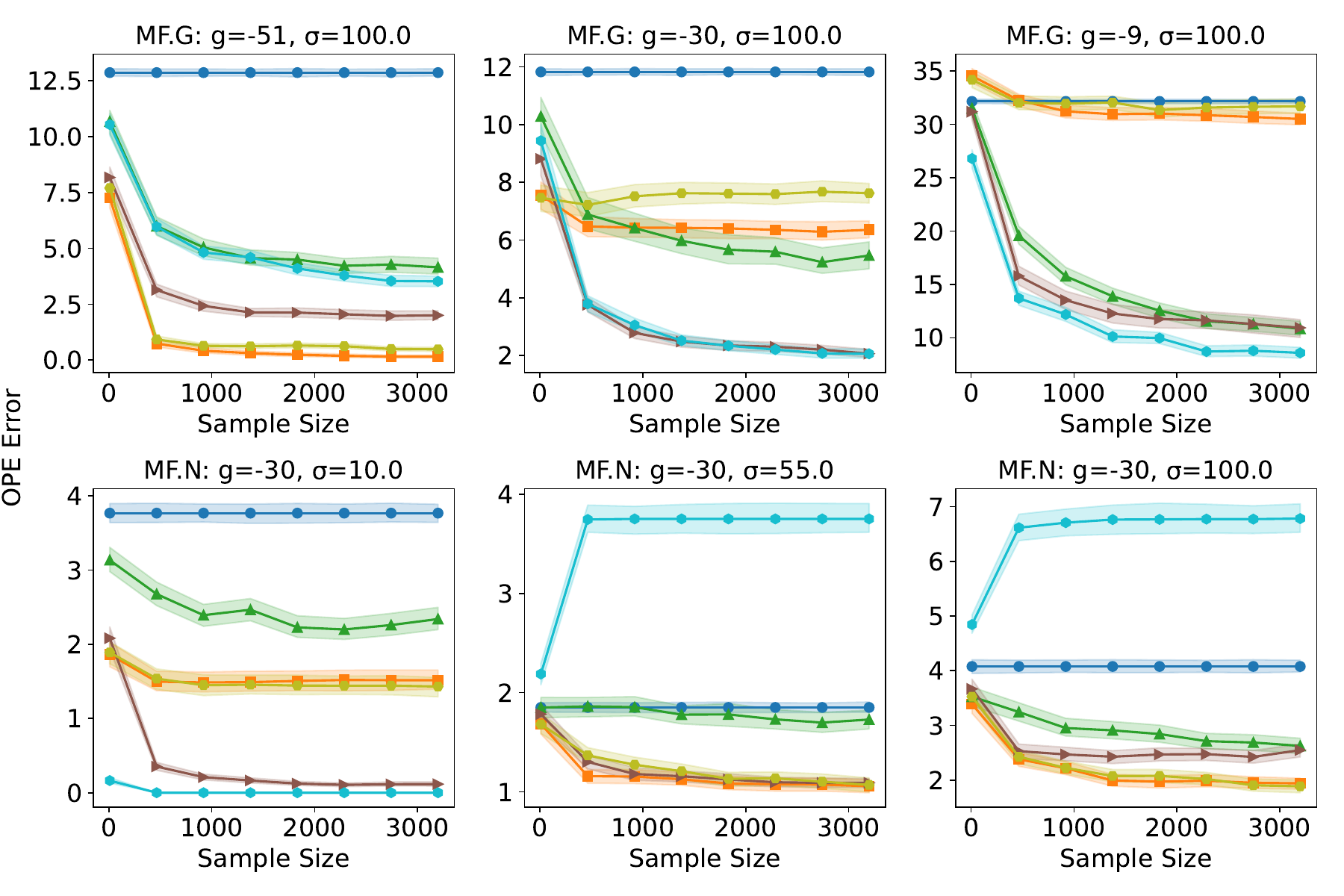}
    \end{minipage}%
    \begin{minipage}{0.3\textwidth}
                \includegraphics[width=0.7\linewidth]{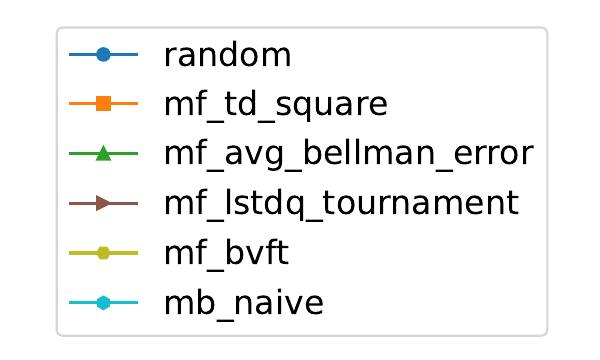}
                \caption{Main results for comparing \textit{ model-free} selectors in the gravity grid (\textbf{MF.G}; top row) and the noise grid (\textbf{MF.N}; bottom row). Each plot corresponds to a different $M^\star$ as indicated in the plot title. 
                ``mb\_naive'' is model-based but still included since it does not require Bellman operator rollouts. 
                \label{fig:mainfigure} 
                }
    \end{minipage}
\end{figure*}

\subsection{Experiment Setup and Main Results}
\label{sec:exp-main}
Our  experiments will be based on the \textit{Hopper-v4} environment \cite{brockman2016openai}. To create a variety of environments, we  add different levels of stochastic noise in the transitions and change the gravity constant (see Appendix~\ref{app:env}). Each environment is then parameterized by the gravity constant $\grav$ and noise level $\noise$. We consider arrays of such environments as the set of candidate simulator $\Mcal$: in most of our results, we consider a ``gravity grid'' (denoted using \textbf{MF.G} in the figures) $\Mcal_{\grav} := \{\envg{0} \ldots, \envg{14}\}$ (fixed noise level, varying gravity constants from $-51$ to $-9$) and a ``noise grid'' (\textbf{MF.N}) $\Mcal_{\noise} := \{\envn{0} \ldots, \envn{14}\}$ (fixed gravity constant, varying noise level from $10$ to $100$). Each array contains 15 environments, though some subsequent results may only involve a subset of them (Section~\ref{sec:subgrid}). 
Some of these simulators will also be treated as groundtruth environment $M^\star$, which determines the groundtruth performance of target policies  and produces the offline dataset $\Dcal$. 

\paragraph{Behavior and Target Policies} We create 15 target policies by running DDPG \cite{Lillicrap2015ContinuousCW} in one of the environments and take checkpoints.  
For each $M^\star$, the behavior policy is the randomized version of one of the target policies; see Appendix~\ref{app:setup} for details. A dataset is collected by  sampling trajectories until $n=3200$ transition tuples are obtained. 
%
As a sanity check, we plot $J_M(\pi)$ for $\pi \in \Pi_{\grav}$ and  $M \in \Mcal_{\grav}$ in Figure \ref{fig:sanity}. 
As can be shown in the figure, the target policies have different performances, and also vary in a nontrivial manner w.r.t.~the gravity constant $\grav$. It is important to perform such a sanity check to avoid degenerate settings, such as $J_{M}(\pi)$ varies little across $M\in\Mcal$ (then even a random selection will be accurate) or across $\pi\in\Pi$. 


\begin{figure*}[t]
    \centering
    \begin{minipage}{0.7\textwidth}
        \includegraphics[width=.95\linewidth]{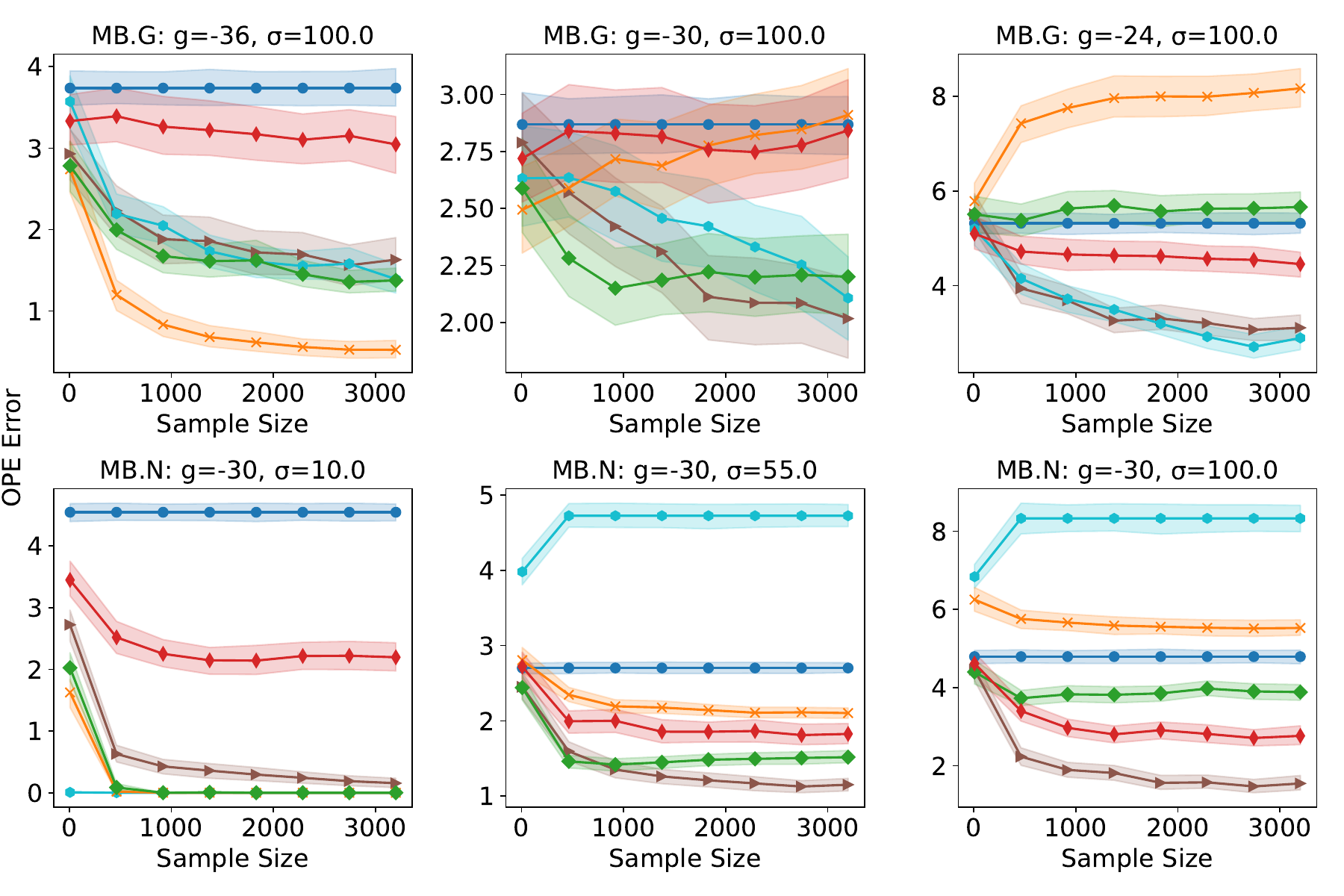}
    \end{minipage}%
    ~
    \begin{minipage}{0.3\textwidth}
                \includegraphics[width=0.7\linewidth]{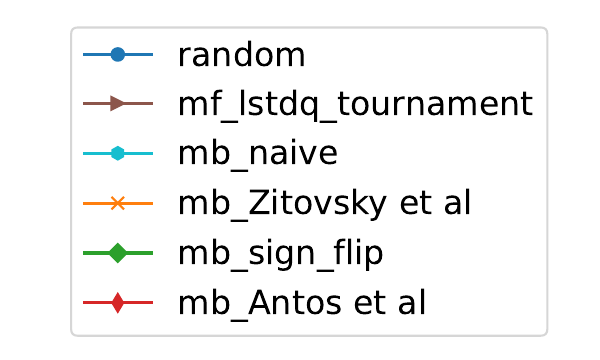}
        \caption{Main results for comparing \textit{model-based} selectors (cf.~Appendix~\ref{app:model-based} for details of methods). \lstd 
        is included as the best model-free selector for comparison, which surprisingly outperforms the more sophisticated model-based ones in Section~\ref{sec:mb-select}.
        \label{fig:mb}}
   \end{minipage}
\end{figure*}

\begin{figure*}
    \centering
    \begin{minipage}{0.24\textwidth}
        \includegraphics[width=\linewidth]{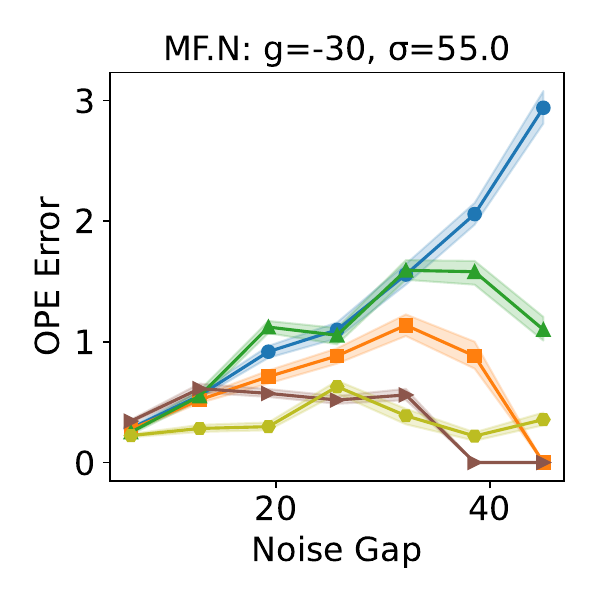}
    \end{minipage}%
    \begin{minipage}{0.24\textwidth}
        \includegraphics[width=\linewidth]{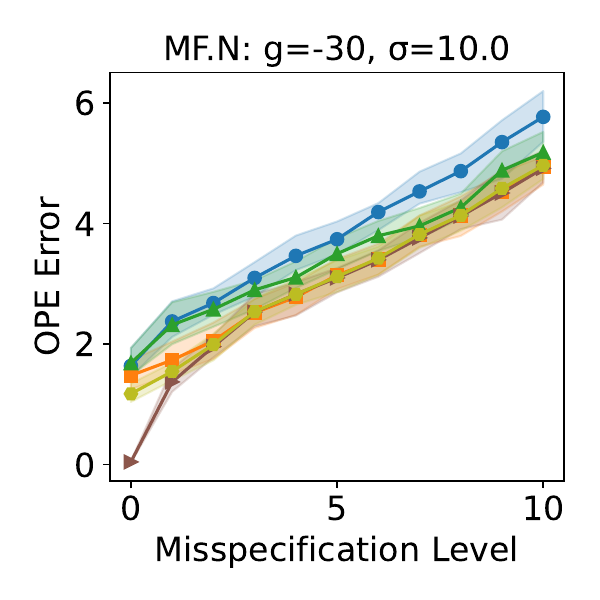}
    \end{minipage}%
    \begin{minipage}{0.48\textwidth}
        \includegraphics[width=\linewidth]{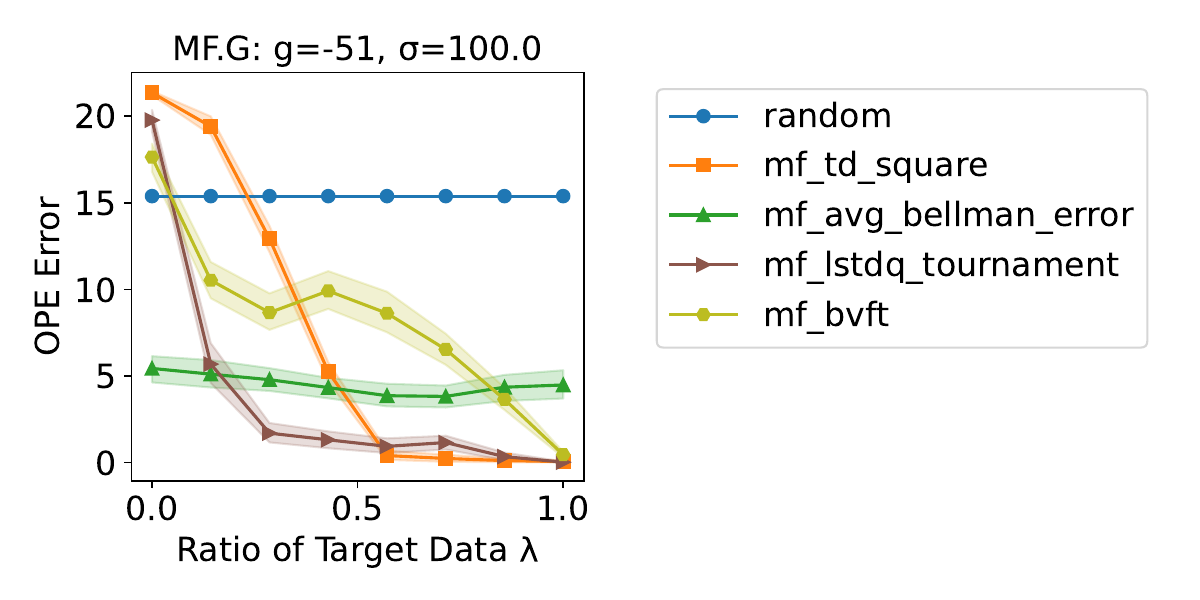}
    \end{minipage}%
    \caption{\textbf{Left:} OPE error vs.~simulator gaps. \textbf{Middle:} OPE error vs.~misspecification. \textbf{Right:} OPE error vs.~data coverage.
    \label{fig:misc} }
\end{figure*}

\paragraph{Number of Rollouts} We then decide the two important parameters for estimating the Q-value, the number of Monte-Carlo rollouts $\numro$ and the horizon (i.e., trajectory length) $H$. 
For horizon, we set $H=1024$ which is substantially longer than typically observed trajectories from the target policies. 
For $\numro$, we plot the convergence of $J_M(\pi)$ estimation and choose $\numro=128$ accordingly (see Figure \ref{fig:sanity}R). 


\paragraph{Compared Methods}  We compare our methods with baselines, including TD-square (Eq.\eqref{eq:td-sq}), na\"ive model-based (Eq.\eqref{eq:naive}), BVFT \cite{zhang2021towards}, and ``average Bellman error'' $|E_{\Dcal}[Q_i(s,a) - r - \gamma Q_i(s',\pi)]|$ \cite{jiang2017contextual}, which can be viewed as our \lstd but with a trivial constant discriminator. 
The model-based methods in Section~\ref{sec:mb-select} require MC rollouts for $\{T_{M_j}^\pi Q_{M_i}^\pi: i,j\in[m]\}$, which requires $O(m^2)$ computational complexity. Therefore, we first compare other selectors (mostly model-free) in Figure~\ref{fig:mainfigure} with  $m=15$; the relatively large number of candidate simulators will also enable the later subgrid studies in Section~\ref{sec:subgrid}. We then perform a separate experiment with $m=5$ for the model-based selectors (Figure~\ref{fig:mb}). 

\paragraph{Main Results} Figure~\ref{fig:mainfigure} shows the main model-free results. 
Our \lstd method demonstrates strong and reliable performance. Note that while some methods sometimes outperform it, they suffer catastrophic performances when the true environment changes. For example, the na\"ive model-based method performs poorly in high-noise environment ($\sigma=55.0$ and $100.0$) when the candidate models have varying degrees of stochasticity (\textbf{MF.N}), as predicted by theory (Section~\ref{sec:mb-select}). BVFT's performance mostly coincides with TD-sq, which is a possible degeneration predicted by \cite{zhang2021towards}. This is particularly plausible when the number of data points $n$ is not large enough to allow for meaningful discretization and partition of the state space required by the method. 

Figure~\ref{fig:mb} shows the result on smaller candidate sets (\textbf{MB.G} and \textbf{MB.N}; see Appendix~\ref{app:setup}), where we implement the three model-based selectors in Section~\ref{sec:mb-select} whose computational complexities grow quadratically with $|\Mcal|$. Our expectation was that (1) these algorithms should address the double-sampling issue and will outperform na\"ive model-based when the latter fails catastrophically, and (2) by having access to more information, model-based should outperform model-free algorithms under realizability. While the first prediction is largely verified, we are surprised to find that the second prediction went wrong, and our \lstd method is more robust and generally outperforms the more complicated model-based selectors. 


\subsection{Subgrid Studies: Gaps and Misspecifications} \label{sec:subgrid}

We now demonstrate how to extract additional insights from the Q-values cached earlier. Due to space limit we are only able to show  representative results in Figure~\ref{fig:misc}, and more comprehensive results can be found in Appendix~\ref{app:add_results}.

\paragraph{Gaps} We investigate an intellectually interesting question: is the selection problem easier if the candidate simulators are very similar to each other, or when they are very different? We argue that the answer is \textbf{neither}, and an intermediate difference (or \textit{gap}) is the most challenging: if the simulators are too similar, their $J_M(\pi)$ predictions will all be close to $J_{M^\star}(\pi)$ since $M\approx M^\star$, and any selection algorithm will perform well; if the simulators are too dissimilar, it should be easy  to tell them apart, which also makes the task easy. 

To empirically test this, we let $M^\star = M_{\noise}^7$  and run the experiments with different 3-subsets of $\Mcal_{\noise}$, including  $\{6,7,8\}$ (least gap), $\{5,7,9\}$, \ldots, $\{0, 7, 14\}$ (largest gap). Since the needed Q-values have already been cached in the main experiments, we can skip  caching and directly run the selection algorithms. We plot the prediction error as a function of gap size in Figure~\ref{fig:misc}L, and observe the down-U  curves (except for trivial methods such as  random) as expected from earlier intuition. 

\paragraph{Misspecification} Similarly, we study the effect of misspecification, i.e., $M^\star \notin \Mcal$. For example, we can take $M^\star = M_{\sigma}^0$, and consider different subsets of $\Mcal_{\noise}$: 0--4 (realizable), 1--5 (low misspecification), \ldots, 10--14 (high misspecification). Figure~\ref{fig:misc}M plots OPE error vs.~misspecification level, where we expect to observe potential difference in the sensitivity to misspecification. The actual result is not that interesting given  similar increasing trends for all methods. 


\subsection{Data Coverage} \label{sec:exp-coverage}
In the previous subsection, we have seen how multiple experiment units that only differ in $\Mcal$ can provide useful insights. Here we show that we can also probe the methods' sensitivity to data coverage by looking at experiment units that only differ in the dataset $\Dcal$. In Figure~\ref{fig:misc}R, we take a previous experiment setting ($\Mcal_\grav$) and isolate a particular target policy $\pi$; then, we create two datasets: (1) $\Dcal_\pi$ sampled using $\pi$; (2) $\Dcal_{\textrm{off}}$ sampled using a policy that is created to be very different from the target policies and offer very little coverage (see Appendix~\ref{app:setup}). Then, we run the algorithm with $\lambda$ fraction of data from $\Dcal_\pi$ combined with $(1-\lambda)$ from $\Dcal_{\textrm{off}}$; as predicted by theory, most methods perform better with better coverage (large $\lambda$), and performance degrades as $\lambda$ goes to $0$. 

\section{Limitations, Future Directions, and Conclusion} \label{sec:conclusion}

We conclude the paper with a discussion of the limitations of our work and potential future directions.

\textbf{Realism of Candidate Q-functions.} While our proposed experimental protocol offers significant advantages in controllability and stability (c.f.~Footnote~\ref{ft:diverge}), a key limitation lies in the realism of the candidate Q-functions it generates. In practice, these functions will likely come from learning algorithms (e.g., TD/FQE),  whose errors may differ from the structured errors induced by our protocol (e.g., varying gravity \textit{uniformly} across states). 
Varying environment parameters in more complex, state-dependent ways could potentially generate more realistic error patterns, presenting an interesting avenue for future investigation. 
That said, the prior protocol using FQE may also face realism challenges, just in a different way: in practice, we carefully tune the optimization of OPE algorithms for each target policy  to produce reasonable candidates, which can be infeasible in empirical benchmarking given the sheer number of policies we are working with, resulting in poorer candidates. 
Neither approach perfectly mirrors reality. In addition, our protocol offers controllable quality that enables targeted studies (like the gap experiments in Section~\ref{sec:subgrid}), so for more comprehensive empirical studies, comparing results from both protocols is advisable.

\textbf{Fundamental Theory of LSTDQ.} As discussed in Appendix~\ref{app:bvft-compare}, our theoretical analysis of \lstd reveals limitations and open questions for the coverage assumption made by LSTDQ ($\sigma_{\min}(A)$ in Theorem~\ref{thm:lstd}), which is standard in the literature and inherited by \lstd. Such conditions differ significantly from the more standard concentrability coefficients (e.g., $\Cinf$ in Theorems~\ref{thm:model} and \ref{thm:signed}), and the lack of satisfactory understanding of such a classic algorithm calls for future investigation. We hope to study this question in the future, and any progress would directly improve the guarantees and understanding of \lstd.

As another potential future direction, LSTDQ can be viewed as an application of \textit{instrumental-variable} regression in value-function estimation \citep{bradtke1996linear, chen2022instrumental}, where the left $\phi$ in the moments of Eq.\eqref{eq:lstd-sig} plays the role of an instrument. While such a choice is standard and natural, it will be interesting to explore alternative choices of instruments and examine if they can lead to improved theoretical guarantees and practical performance in the model-selection problem.

\section*{Acknowledgements}
Nan Jiang acknowledges funding support from NSF CNS-2112471, NSF CAREER IIS-2141781, Google Scholar Award, and Sloan Fellowship.

\bibliographystyle{plainnat}
\bibliography{RL,external}

\newpage
\appendix

\section{Other Related Works} \label{app:related}

\paragraph{Adaptivity Guarantees for Offline RL Model Selection} Our theoretical guarantees state that the selected candidate function/model provides a $J(\pi)$ estimate that is close to the groundtruth, as if the algorithm knew which candidate is correct. This can be viewed as a form of adaptivity guarantees, which is the goal of several theoretical works on model selection. \citet{su2020adaptive} and \citet{udagawa2023policy} study adaptive model selection for OPE, but their approaches crucially rely on the importance sampling estimator, which we do not consider due to the focus on long-horizon tasks. \citet{lee2022oracle} (who build on and improve upon the earlier works of \citet{farahmand2011model} and \citet{jiang2015abstraction}) study model-selection of value functions, with a focus on the double-sampling issue and its relationship with Bellman completeness; these are also the very core consideration in our theoretical derivations. However, their approach treats the OPE instances (such as FQE) in a less black-box manner, and the goal is to select the ``right'' function \textit{class} for the OPE algorithm (instead of directly selecting a final output function) from a nested series of classes, one of which is assumed to be Bellman-complete \cite{antos2008learning, chen2019information} and have low generalization errors.\footnote{Given the relationship between Bellman-completeness and bisimulation abstractions \cite[Proposition 9]{chen2019information}, their setting is a natural generalization of \citet{jiang2015abstraction}'s setting of selecting a bisimulation from a nested series of state abstractions.} This makes their approach less widely applicable than ours, and we cannot empirically evaluate their algorithm in our protocol as a consequence. Nevertheless, their results provide an interesting alternative and valuable insights to the model-selection problem, which are 
``complementary'' to the BVFT line of work \cite{xie2020batch, zhang2021towards} that we further develop. Similarly,  \citet{miyaguchi2022almost} considers the selection of Bellman operators, which can be instantiated  as regression using different function classes, making their approach similar to \citet{lee2022oracle} in spirit. 

\paragraph{Other Works on Model Selection}
Apart from the above works and those already discussed in the main text, most related works are not concerned about new selection algorithms with theoretical guarantees  or experiment protocol for OPE model selection (see \citet{voloshin2019empirical, kiyohara2023scope} for experiment protocol and benchmarks of OPE itself), so their focus is different and often provides insights complementary to our work. For example, \citet{nie2022data} discuss data splitting in offline model selection; this is a question we avoid by assuming a fixed holdout dataset for OPE model selection. \citet{tang2021model} compare importance sampling methods and FQE and conclude that FQE is more effective (which echos with \citet{paine2020hyperparameter}), but does not provide provable methods for selecting the hyperparameters of FQE, especially the choice of function approximation. \citet{kumar2021workflow} provide a pipeline for offline RL that includes hyperparameter selection as a component, but the recommendations are heuristics specialized to algorithms such as CQL \cite{kumar2020conservative}.

\paragraph{Debate on Bellman Error as a Proxy} Most of the selectors we consider estimate some variants of the Bellman error. Regarding this, \citet{fujimoto2022should} challenge the idea of using Bellman errors for model selection due to their surrogacy and poor correlation with the actual objective. Despite the valid criticisms, there are no clear alternatives that address the pain points of Bellman errors, and the poor performance is often due to lack of data coverage, which makes the task fundamentally difficult for any algorithm. We still believe that Bellman-error-like objectives (defined in a broad sense, which includes our \lstd) are promising for model selection, and the improvement on OPE error (which is what we report in experiments) is the right goal to pursue instead of correlation (which we know could be poor due to the surrogate nature of Bellman errors). 

\paragraph{Data Coverage in Model Selection} As mentioned above and demonstrated in our experiments, the lack of data coverage is a key factor that determines the difficulty of the selection tasks. \citet{lee2022model} propose  feature selection algorithms for offline contextual bandits that account for the different coverage effects of candidate features, but it is unclear how to extend the ideas to MDPs. On a related note, ideas from offline RL training, such as version-space-based pessimism \cite{xie2021bellman}, can also be incorporated in our method. This will unlikely improve the accuracy of OPE itself, but may be helpful if we measure performance by how OPE can eventually lead to successful selection of performant policies, which we leave for future investigation. 

\paragraph{Experimental Protocol} The closest to our experimental protocol is the work of \citet{sajed2018high} who also uses Monte-Carlo rollouts to produce unbiased estimates of value functions on individual states. Other than this point, their work is largely orthogonal to ours as they focus on establishing high-confidence bounds for the estimated value, assuming the simulator is the groundtruth environment, whereas we draw Monte-Carlo trajectories from different simulators to facilitate the model-selection problem. 

\section{Details of Model-based Selectors} \label{app:model-based}
Here we expand Section~\ref{sec:mb-select} on the model-based setting, i.e., choosing a model from $\{M_i\}_{i\in[m]}$. This is a practical scenario when we have structural knowledge of the system dynamics and can build reasonable simulators, but simulators of complex real-world systems will likely have many design choices and knobs that cannot be set from prior knowledge alone. In some sense, the task is not very different from  system identification in control and model learning in model-based RL, except that (1) we focus on a finite and small number of plausible models, instead of a rich and continuous hypothesis class, and (2) the ultimate goal is to perform accurate OPE, and learning the model is only an intermediate step.

\paragraph{Existing Methods} 
Given the close relation to model learning, a natural approach is to simply minimize the model prediction loss \cite{jiang2024note}: a candidate model $M$ is  scored by
\begin{align} \label{eq:naive}
\EE_{(s,a,s')\sim \data, \tilde s \sim P(\cdot|s,a)}[d(s', \tilde s)],
\end{align}
where $s'$ is in the data and generated according to the real dynamics $P^\star(\cdot|s,a)$, and $\tilde s$ is generated from the candidate model $M$'s dynamics $P$. $d(\cdot, \cdot)$ is a distance metric that measures the difference between states. 

Despite its wide use and simplicity \cite{ha2018world, nagabandi2018neural, hafner2023mastering}, the method has  major caveats: first, the distance metric $d(\cdot, \cdot)$ is a design choice. When the state is represented as a real-valued vector, it is natural to use the $\ell_2$ distance as $d(\cdot, \cdot)$, which changes if we simply normalize/rescale the coordinates. Second, the metric is biased for stochastic environments as discussed in prior works \cite{jiang2024note, voelcker2023lambda}, which we also demonstrate in the experiment section (Section~\ref{sec:exp}); essentially this is a version of the double-sampling issue but for the model-based setting \cite{amortila2024reinforcement}. As a minimal counterexample, suppose $\Scal\subset \RR^2$ and we focus on the transition distribution from a particular $(s,a)$ pair where groundtruth is uniform over 4 points $(\pm 1, \pm 1)$. The expected loss of the groundtruth model itself, as in Eq.\eqref{eq:naive}, is $1 + \sqrt{2}/2$; this is higher than a wrong model that always predicts $0$
 deterministically, which yields a loss of $\sqrt{2}$.

There are alternative methods that address these issues. For example, in the theoretical literature, MLE losses are commonly used, i.e., $\EE_{\data}[\log P(s'|s,a)]$ \cite{agarwal2020flambe, uehara2021representation, liu2023optimistic}, which avoids $d(\cdot, \cdot)$ and works properly for stochastic MDPs by effectively measuring the KL divergence between $P^\star(\cdot|s,a)$ and $P(\cdot|s,a)$. Unfortunately, most complex simulators do not provide explicit probabilities $P(s'|s,a)$, making it difficult to use in practice. Moreover, when the support of $P^\star(\cdot|s,a)$ is not fully covered by $P(\cdot|s,a)$, the loss can become degenerate. 

To address these issues, we propose to estimate the Bellman error $\EE_{\data}[(Q_i - \Tcal^\pi Q_i)^2]$, where $Q_i := Q_{M_i}^\pi$. As discussed earlier, this objective suffers the double-sampling issue in the model-free setting, which we show can be addressed when we have access to candidate models $\{M_1, \ldots, M_m\}$ that contains the true dynamics $M^\star$. Moreover, the Bellman error $|Q_{M_i}^\pi(s,a) - (\Tcal^\pi Q_{M_i}^\pi)(s,a)| = $ 
$$
\gamma |\EE_{s'\sim P^\star(\cdot|s,a)}[Q_i(s',\pi)] -  \EE_{s'\sim P_i(\cdot|s,a)}[Q_i(s',\pi)]|,
$$
which can be viewed as an IPM loss \cite{muller1997integral} that measures the divergence between $P^\star(\cdot|s,a)$ and $P(\cdot|s,a)$ under $Q_i(\cdot, \pi)$ as a discriminator. IPM is also a popular choice of model learning objective in theory \cite{sun2019model, voloshin2021minimax}, and the Bellman error provides a natural discriminator relevant for the ultimate task of interest, namely OPE.

\subsection{Regression-based Selector} \label{app:antos}
Recall that the difficulty in estimating the Bellman error $\EE_{\data}[(Q_i - \Tcal^\pi Q_i)^2]$ is the uncertainty in $\Tcal^\pi$. 
To overcome this, we leverage the following observation from \citet{antos2008learning}, where for any $f: \Scal\times\Acal\to \RR$, 
\begin{align} \label{eq:T-argmin}
\Tcal^\pi f \in \argmin_{g:\Scal\times\Acal\to\RR} \EE_{\data}[(g(s,a) - r - \gamma f(s',\pi))^2],
\end{align}
which shows that we can estimate $\Tpi Q_i$ by solving a sample-based version of the above regression problem with $f=Q_i$. Statistically, however, we cannot afford to minimize the objective over all possible functions $g$; 
we can only search over a limited set $\cG_i$ that ideally captures the target $\Tpi Q_i$. Crucially, in the model-based setting 
we can generate such a set directly from the candidates $\{M_i\}_{i\in[m]}$:
\begin{proposition} \label{prop:gi}
Let $\Gcal_i := \{ \Tcal_{M_j}^\pi Q_i: j\in[m]\}$. Then if $M^\star \in \{M_i\}_{i\in[m]}$, it follows that $\Tcal^\pi Q_i = \Tcal_{M^\star}^\pi Q_i \in \Gcal_i$.
\end{proposition}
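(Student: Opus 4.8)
The plan is to unwind the definitions and observe that the statement is essentially a membership claim, with no real analytic content. First I would record the notational convention established in Section~\ref{sec:prelim}: the Bellman operator of the real environment $M^\star$ is exactly what is denoted $\Tcal^\pi$ (with the $M^\star$ subscript dropped throughout the paper), so the first equality $\Tcal^\pi Q_i = \Tcal_{M^\star}^\pi Q_i$ holds by definition and needs no argument beyond spelling out the convention.

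For the membership $\Tcal_{M^\star}^\pi Q_i \in \Gcal_i$, I would invoke the hypothesis $M^\star \in \{M_j\}_{j\in[m]}$ to pick an index $j^\star\in[m]$ with $M_{j^\star} = M^\star$. The key point is that the Bellman operator of a model is determined entirely by that model's reward function and transition kernel; since $M_{j^\star}$ and $M^\star$ are literally the same MDP, their operators coincide as maps on $\RR^{\Scal\times\Acal}$, and in particular $\Tcal_{M_{j^\star}}^\pi Q_i = \Tcal_{M^\star}^\pi Q_i$. By the definition $\Gcal_i := \{\Tcal_{M_j}^\pi Q_i : j\in[m]\}$, the element $\Tcal_{M_{j^\star}}^\pi Q_i$ is manifestly a member of $\Gcal_i$, which closes the claim.

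The only thing to double-check — and what passes for the ``hard part'' in an otherwise immediate argument — is that the candidate operators $\Tcal_{M_j}^\pi$ all share the same reward term, so that equality of dynamics $P_{j^\star}=P^\star$ genuinely yields equality of the operators and not merely of their transition parts. This is guaranteed by the standing w.l.o.g.~assumption from Section~\ref{sec:prelim} that each $M_j$ differs from $M^\star$ only in its transition kernel, so the reward $R$ is common to all members of $\Mcal$; once this is noted, the matching of the two operators at $M^\star$ follows directly from the defining formula $(\Tcal_{M_j}^\pi f)(s,a) = R(s,a) + \gamma\,\EE_{s'\sim P_j(\cdot|s,a)}[f(s',\pi)]$, evaluated at $f = Q_i$ and $j = j^\star$.
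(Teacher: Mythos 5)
Your proposal is correct and matches the paper's treatment: the paper states Proposition~\ref{prop:gi} without proof precisely because it is immediate from the definitions, and your argument is exactly that unwinding --- the notational convention $\Tcal^\pi \equiv \Tcal_{M^\star}^\pi$, the index $j^\star$ with $M_{j^\star} = M^\star$, and the standing assumption that all candidates share the reward $R$ so that equality of transition kernels gives equality of operators. Nothing is missing; your explicit check of the shared-reward point is a reasonable bit of care that the paper leaves implicit.
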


The constructed $\cG_i$ ensures that regression is statistically tractable 
given its small cardinality, $\abr{\cG_i}=m$. 
To select $Q_i$, 
we choose $Q_i$ with the smallest loss defined as follows:
\begin{enumerate}[leftmargin=*, itemsep=1pt, topsep=0.2pt, parsep=0pt, partopsep=0pt]
\item  $\emp g_i:=
\argmin_{g\in\Gcal_i} \EE_{\Dcal}[(g(s,a) - r - \gamma Q_i(s',\pi))^2].$
\item The loss of $Q_i$ is $\EE_{\Dcal}[(\emp g_i(s,a) - 
Q_i(s,a))^2]$.
\end{enumerate}
The 2nd step follows from \citet{zitovsky2023revisiting}. 
Alternatively, we can also use the min value of Eq.\eqref{eq:T-argmin} (instead of the argmin function) to correct for the bias in TD-squared (Eq.\eqref{eq:td-sq}) \cite{antos2008learning};  see \citet{liu2023offline} for another related variant. These approaches share similar theoretical guarantees under standard analyses \cite{xie2020batch, xie2021bellman}, and we only state the guarantee for the \citet{zitovsky2023revisiting} version below, but will include both in the experiments (``mb\_Zitovsky et al.'' and ``mb\_Antos et al'' in Figure~\ref{fig:mb}).

\begin{theorem}\label{thm:model}
Let $\Cone \ldef{} \En_{\pi}\sbr*{\frac{d^\pi(s,a)}{\mu(s,a)}}$.
For $Q_{\ihat}$ that minimizes $\EE_{\Dcal}[(\emp g_i(s,a) - Q_i(s,a))^2]$ 
we have w.p.~$\ge 1-\delta$,
\begin{align}
 J(\pi) - \En_{d_{0}}\sbr*{Q_{\ihat}(s,\pi)} 
 & \le  \frac{\Vmax}{1-\gamma} \sqrt{\frac{152\cdot\Cone\cdot\log\rbr*{\frac{4m}{\delta} }}{n}}.
\end{align}
\end{theorem}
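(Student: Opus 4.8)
The plan is to chain three reductions: (i) from the OPE error to the $L^2(\data)$ Bellman error of the selected value function, (ii) from that Bellman error to the population value of the selection loss, and (iii) from the population selection loss to its empirical counterpart, which the algorithm explicitly minimizes. First I would invoke the standard value-difference (telescoping) identity: for any $Q$, $\EE_{s\sim d_0}[Q(s,\pi)] - J(\pi) = \frac{1}{1-\gamma}\EE_{(s,a)\sim d^\pi}[(Q - \Tcal^\pi Q)(s,a)]$, where $d^\pi$ is the (normalized) discounted occupancy of $\pi$. Taking absolute values and changing measure from $d^\pi$ to $\data$ via Cauchy--Schwarz gives $|J(\pi) - \EE_{d_0}[Q(s,\pi)]| \le \frac{1}{1-\gamma}\sqrt{\Cone}\,\sqrt{\EE_\data[(Q - \Tcal^\pi Q)^2]}$, where I use that $\Cone = \EE_{(s,a)\sim d^\pi}[d^\pi/\data] = \EE_\data[(d^\pi/\data)^2]$ is exactly the second-moment (chi-square-type) density ratio. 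This reduces the theorem to showing that $Q_{\ihat}$ has $L^2(\data)$ Bellman error of order $\Vmax\sqrt{\log(m/\delta)/n}$.

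Next I would analyze the two-step loss. The population minimizer over $\Gcal_i$ of $\EE_\data[(g - r - \gamma Q_i(s',\pi))^2]$ is the conditional mean of the regression target, namely $\Tcal^\pi Q_i$, which by Proposition~\ref{prop:gi} lies in $\Gcal_i$; this is precisely where the \emph{double-sampling} obstruction dissolves, because the squared-loss excess risk of any $g$ equals $\EE_\data[(g - \Tcal^\pi Q_i)^2]$ exactly (bias-variance orthogonality of the Bayes regressor). A standard realizable least-squares bound for the finite class $\Gcal_i$ (of size $m$) then gives $\|\emp{g}_i - \Tcal^\pi Q_i\|_{2,\data}^2 = O(\Vmax^2 \log(m/\delta)/n)$, holding uniformly over $i\in[m]$ after a union bound. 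By the triangle inequality in $L^2(\data)$, this controls the gap between $\sqrt{\EE_\data[(\emp{g}_i - Q_i)^2]}$ and the true Bellman-error norm $\sqrt{\EE_\data[(\Tcal^\pi Q_i - Q_i)^2]}$ for every $i$.

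Finally I would close the selection loop. For the true index $i^\star$ (with $M_{i^\star}=M^\star$, hence $Q_{i^\star}=Q^\pi$ and $\Tcal^\pi Q_{i^\star}=Q_{i^\star}$), the Bellman error vanishes, so the population loss of $i^\star$ is at most the regression error from the previous step. A uniform concentration of the empirical loss $\EE_\Dcal[(\emp{g}_i - Q_i)^2]$ ($\emp{L}_i$ for short) around its population value, combined with the optimality $\emp{L}_{\ihat}\le \emp{L}_{i^\star}$, then shows that the population loss of $\ihat$ is also $O(\Vmax^2\log(m/\delta)/n)$. Transferring back through the triangle-inequality step bounds $\EE_\data[(Q_{\ihat}-\Tcal^\pi Q_{\ihat})^2]$ at the same rate, and substituting into the first-paragraph reduction yields the claimed $\frac{\Vmax}{1-\gamma}\sqrt{\Cone\log(m/\delta)/n}$ bound, with the explicit constant $152$ and the $4m$ inside the logarithm being bookkeeping from the Bernstein and union-bound constants.

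The hard part is the realizable least-squares concentration. Because $\emp{g}_i$ is chosen by empirical minimization it is data-dependent, so obtaining the fast $n^{-1}$ (rather than $n^{-1/2}$) rate on $\|\emp{g}_i - \Tcal^\pi Q_i\|_{2,\data}^2$ requires a variance-aware (Bernstein/localization) argument uniform over $\Gcal_i$, and then uniform over the data-dependent index $\ihat$. Care is also needed so that the union bound effectively scales with the number of candidate models; since $\log(m^2/\delta)\le 2\log(m/\delta)$ for $\delta \le 1$, any overcounting of functions in $\bigcup_i \Gcal_i$ is absorbed into the leading constant, which is why I do not expect it to change the stated rate.
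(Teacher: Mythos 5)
Your proposal is correct and follows essentially the same route as the paper's proof: the value-difference identity plus Cauchy--Schwarz change of measure with $\Cone$, a fast-rate realizable least-squares bound for $\emp g_i$ over the finite class $\Gcal_i$ (the paper's Lemma~\ref{lem:model-reg-concentration}), and a Bernstein-type concentration of the selection objective combined with empirical optimality and the fact that $Q^{\pi}=\Tcal^{\pi}Q^{\pi}$ kills the loss at $i^\star$ (the paper's Lemma~\ref{lem:model-obj-concentration}). The only cosmetic difference is that you chain triangle inequalities on $L^2(\data)$ norms where the paper uses the $(a+b)^2\le 2a^2+2b^2$ decomposition into its terms (T1) and (T2); the substance, including the variance-aware argument needed to preserve the $n^{-1}$ rate on squared quantities, is identical.
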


\subsection{\absind} \label{app:saber}
We  now present another  selector that leverages the information of $\Gcal_i = \{\Tcal_{M_j}^\pi: j\in[m]\}$ in a different manner. Instead of measuring the squared Bellman error, we can also measure the absolute error, which can be written as (some $(s,a)$ argument to functions are omitted): 
\begin{align}
&~ \EE_{\data}[|Q_i - \Tcal_{M^\star}^\pi Q_i|] \nonumber \\
= &~ \EE_{\data}[\sgn(Q_i(s,a) - (\Tcal^\pi Q_i)(s,a))(Q_i - \Tcal^\pi Q_i)] \nonumber \\ \nonumber
= &~ \EE_{\data}[\sgn(Q_i - \Tcal^\pi Q_i)(Q_i(s,a) - r - \gamma Q_i(s',\pi))] \\  \label{eq:saber}
\le &~ \max_{g\in\Gcal_i} \EE_{\data}[\sgn(Q_i - g) (Q_i(s,a) - r - \gamma Q_i(s',\pi))]. 
\end{align}
Here, the $\cG_i$ from Proposition~\ref{prop:gi}
induces a set of sign functions $\sgn(Q_i - g)$, 
which includes $Q_i - \Tpi Q_i$, 
that will negate any negative TD errors. 
The guarantee is as follows:

\begin{theorem}\label{thm:signed}
Let $Q_{\ihat}$ be the minimizer of the empirical estimate of Eq.\eqref{eq:saber}, and  $\Cinf \ldef{} \max_{s,a} \frac{d^\pi(s,a)}{\mu(s,a)}$. 
W.p.~$\ge 1-\delta$, 
\begin{align*}
  J(\pi) - \En_{d_{0}}\sbr*{Q_{\ihat}(s,\pi)} 
  \le 4\cdot\Cinf\cdot\Vmax\sqrt{\frac{\log\rbr*{2m / \delta}}{n}}.
\end{align*}
\end{theorem}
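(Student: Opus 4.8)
The plan is to combine the deterministic decomposition already displayed in Eq.\eqref{eq:saber} with a uniform concentration argument over the finite discriminator set, and then convert the resulting control on the average absolute Bellman error into a bound on the OPE error via a standard change-of-measure (concentrability) argument. First I would establish the population-level inequality that is the starting point: for the true index $i^\star$ (with $Q_{i^\star}=Q^\pi$), the Bellman residual $Q_{i^\star}-\Tcal^\pi Q_{i^\star}$ is identically zero, so the population loss in Eq.\eqref{eq:saber} evaluated at $i^\star$ is $0$ (each term inside the max has mean $0$ because $\EE_\data[\sgn(Q_{i^\star}-g)(Q_{i^\star}(s,a)-r-\gamma Q_{i^\star}(s',\pi))]$ reduces to $\EE_\data[\sgn(\cdots)\cdot(Q_{i^\star}-\Tcal^\pi Q_{i^\star})]=0$, using that $\EE_\data[r+\gamma Q_{i^\star}(s',\pi)\mid s,a]=(\Tcal^\pi Q_{i^\star})(s,a)$). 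Meanwhile, by construction of $\Gcal_i$ (Proposition~\ref{prop:gi}), the maximizing choice $g=\Tcal^\pi Q_i$ is available, so the population loss in Eq.\eqref{eq:saber} upper-bounds the true average absolute Bellman error $\EE_\data[\,|Q_i-\Tcal^\pi Q_i|\,]$ for every candidate $i$.

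Next I would handle the statistical error. For each fixed pair $(i,g)$ with $g\in\Gcal_i$, the quantity inside the expectation, namely $\sgn(Q_i-g)(Q_i(s,a)-r-\gamma Q_i(s',\pi))$, is a bounded random variable (its magnitude is at most $|Q_i(s,a)-r-\gamma Q_i(s',\pi)|\le \Vmax+\Rmax+\gamma\Vmax = O(\Vmax)$, since all $Q$-values lie in $[0,\Vmax]$ and $r\in[0,\Rmax]$). Applying Hoeffding's inequality and a union bound over the $O(m^2)$ pairs $(i,g)$—or more simply over the $m$ candidates after taking the max inside—gives that, with probability at least $1-\delta$, the empirical loss and the population loss in Eq.\eqref{eq:saber} differ by at most $O\big(\Vmax\sqrt{\log(m/\delta)/n}\big)$ uniformly in $i$. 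Combining with the previous paragraph: the empirical minimizer $\ihat$ has empirical loss no larger than that of $i^\star$, whose empirical loss is within the concentration radius of its population value $0$; chaining these inequalities yields $\EE_\data[\,|Q_{\ihat}-\Tcal^\pi Q_{\ihat}|\,]\le O\big(\Vmax\sqrt{\log(m/\delta)/n}\big)$.

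Finally I would convert the average (data-distribution-weighted) absolute Bellman error into OPE error. This is where the concentrability coefficient $\Cinf=\max_{s,a} d^\pi(s,a)/\mu(s,a)$ enters. The standard telescoping identity gives $J(\pi)-\EE_{d_0}[Q_{\ihat}(s,\pi)] = \frac{1}{1-\gamma}\EE_{(s,a)\sim d^\pi}[(\Tcal^\pi Q_{\ihat}-Q_{\ihat})(s,a)]$ (the discounted-occupancy form of the value-difference lemma), and bounding this by $\frac{1}{1-\gamma}\EE_{d^\pi}[\,|Q_{\ihat}-\Tcal^\pi Q_{\ihat}|\,]\le \frac{\Cinf}{1-\gamma}\EE_\data[\,|Q_{\ihat}-\Tcal^\pi Q_{\ihat}|\,]$ via the change of measure. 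Multiplying the $O(\Vmax\sqrt{\log(m/\delta)/n})$ bound by $\Cinf/(1-\gamma)$ and tracking constants should produce the stated $4\Cinf\Vmax\sqrt{\log(2m/\delta)/n}$; note the $\Vmax/(1-\gamma)$ here is consistent with $\Vmax=\Rmax/(1-\gamma)$, so I would double-check whether the displayed bound absorbs the extra $(1-\gamma)^{-1}$ into $\Vmax$ or whether the sign-flip structure avoids paying it (the absolute-value formulation may save one horizon factor relative to the squared-loss Theorem~\ref{thm:model}).

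I expect the main obstacle to be the final change-of-measure step and the associated bookkeeping of horizon factors: the value-difference lemma naturally produces a $d^\pi$-weighted Bellman error, but our concentration controls the $\mu$-weighted error, so the $\Cinf$ factor must be introduced cleanly, and one must be careful about whether the bound should feature $\Cinf$ (using $\|\cdot\|_1$-type pushforward, as here) versus $\sqrt{\Cone}$ (as in Theorem~\ref{thm:model}, which uses a squared loss and Cauchy–Schwarz). Getting the exact constant $4$ and the $\log(2m/\delta)$ right will require matching the Hoeffding constant to the union-bound cardinality, but the conceptual content is entirely in the three steps above.
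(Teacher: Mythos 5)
Your proposal is correct and follows essentially the same route as the paper's proof: the same three ingredients (the sign-flip upper bound of Eq.\eqref{eq:saber} together with $\Tcal^\pi Q_i \in \Gcal_i$ from Proposition~\ref{prop:gi}; Hoeffding plus a union bound combined with the empirical-minimizer chaining against $i^\star$, whose population loss is zero; and the value-difference/change-of-measure step introducing $\Cinf$), with the only cosmetic difference that you apply the change of measure at the end while the paper applies it at the start. Your flagged worry about horizon factors is well-founded: the paper's own chain produces $\frac{\Cinf}{1-\gamma}\EE_\data\big[\,\abs{(\Tcal^\pi Q_{\ihat})(s,a) - Q_{\ihat}(s,a)}\,\big]$ after the change of measure but then silently drops the $\frac{1}{1-\gamma}$ in the final combination (contrast Theorem~\ref{thm:model}, where it is retained), so a careful execution of either argument yields $\frac{4\,\Cinf \Vmax}{1-\gamma}\sqrt{\log(2m/\delta)/n}$ unless $d^\pi$ in the definition of $\Cinf$ is taken to be the unnormalized discounted occupancy.
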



\section{Proofs} \label{app:proof}

\subsection{Proof of Theorem \ref{thm:lstd}}
\begin{proof}
Define the following loss vectors, 
\begin{align*}
  \ell(\theta) \ldef A\theta - b \in \RR^d,
  \\
  \ellhat(\theta) \ldef \Ahat\theta - \bhat \in \RR^d
\end{align*}
and recall that we select as the estimator 
\begin{align*}
  \thetahat \ldef{} \argmin_{\theta\in\Theta}{} \nbr{\ellhat(\theta)}_{\infty}.
\end{align*}
Since $\thetastar = \Ainv b$, we can write the desired bound as a function of $\ell(\theta)$ as follows, 
\begin{align*}
  \nbr{ Q^{\pi}(\cdot) - \phit(\cdot)\thetahat}_{\infty} 
  & = \nbr{\phit(\cdot)\rbr{\thetahat - \thetastar}}_\infty  
  \\
  & = \nbr{\phit(\cdot)\Ainv\rbr{A\thetahat - b}}_{\infty}
  \\
  & = \nbr{\phit(\cdot)\Ainv\ell(\thetahat)}_{\infty}
  \\
  & = \max_{s,a}\abs{ \phit(s,a)\Ainv\ell(\thetahat)}
  \\
  & \le \rbr*{\max_{s,a}{} \nbr*{\phit(s,a)}_{2}}\cdot\nbr*{\Ainv}_{2} \cdot \nbr{\ell(\thetahat)}_{2}
  \\
  & \le \sqrt{d}B_{\phi}\cdot\nbr*{\Ainv}_{2} \cdot \nbr{\ell(\thetahat)}_{\infty}
\end{align*}

Next, we control the $\ell(\thetahat)$ term. In the sequel we will establish via concentration that 
\begin{equation}\label{eq:lstd-concentration}
  \nbr{\ell(\theta) - \ellhat(\theta)}_{\infty} \le \vepsstat \ldef{} 3\cdot \max\{\Rmax,B_\phi\}^2 \cdot \sqrt{\frac{\log(2d\abs{\Theta}\delta^{-1})}{n}} ,~\forall\theta\in\Theta.
\end{equation}

Then, we have that
\begin{align*}
  \nbr{\ell(\thetahat)}_{\infty} 
  & \le \nbr{\ellhat(\thetahat)}_{\infty} + \vepsstat
  \\
  & \le \nbr{\ellhat(\thetastar)}_{\infty} + \vepsstat
  \\
  & \le \nbr{\ell(\thetastar)}_{\infty} + 2\cdot\vepsstat
  \\
  & = 2\cdot\vepsstat,
\end{align*}
where we recall that $\thetahat = \argmin_{\theta\in\Theta}{}\nbr{\ellhat(\theta)}_\infty$ in the second inequality, and that $A\thetastar = b$ in the last line. Combining the above, we obtain 
\begin{align*}
  \nbr{ Q^{\pi} - \phit\thetahat}_{\infty} &\le 2 \sqrt{d} B_{\phi} \cdot \nbr*{\Ainv}_{2} \cdot \vepsstat\\
  &= 6 \sqrt{d} \cdot \nbr*{\Ainv}_{2} \cdot \max\{\Rmax,B_\phi\}^2 \cdot \sqrt{\frac{\log(2d\abs{\Theta}\delta^{-1})}{n}}  ,
\end{align*}
as desired. We now establish the concentration result of Equation \eqref{eq:lstd-concentration}.


\paragraph{Concentration results}
For $j\in[d]$, let $\phi_{j}(s,a) \in \bbR$ refer to the $j$'th entry of the vector. For any $(s,a,s')$ and $\theta$, define 
\begin{align*}
  B^\pi(s,a,s';{}\theta) \ldef{} \phit(s,a)\theta - \gamma\phit(s',\pi)\theta - r(s,a)
\end{align*}
Recall that $\nbr*{\phi(s,a)}_{2} \leq B_\phi$ for all $(s,a)$ and that $\nbr*{\theta}_{2} \leq B_\Theta$ for all $\theta \in \Theta$. We have that, for all $j \in [d]$, $\theta\in\Theta$, and $s,a \in \cS \times \cA$ we have that $\phi_j(s,a) B^\pi(s,a,s';\theta)$ is bounded, since:
\begin{align*}
    & \phi_{j}(s,a)\prn*{\phit(s,a)\theta - \gamma\phit(s',\pi)\theta - r(s,a)}  \\
    \leq&~ \nrm{\phi(s,a)}_\infty\prn*{\nrm{\phi(s,a)}_2\nrm{\theta}_2 + \gamma\nrm{\phi(s',\pi)}_2\nrm{\theta}_2 + \Rmax} \\
    \leq&~ \max_{s,a}\nrm{\phi(s,a)}_2\prn*{\max_{s,a}\nrm{\phi(s,a)}_2\nrm{\theta}_2 + \gamma\max_{s,a}\nrm{\phi(s,a)}_2\nrm{\theta}_2 + \Rmax} \\
    \leq&~ (1+\gamma)B_\phi^2  + \Rmax B_\phi \\
   \leq &~ 3 \max\{B_\phi, \Rmax\}^2.
\end{align*}
Thus, from Hoeffding's inequality and a union bound, we have that for all $j \in [d]$ and $\theta \in \Theta$:
\begin{align*}
  \abr*{\En_{\mu}\sbr*{\phi^{j}(s,a)B^{\pi}(s,a,s';\theta)} - \wh\En_{\mu}\sbr*{\phi^{j}(s,a)B^{\pi}(s,a,s';\theta)}} \leq 3 \max\{B_\phi, \Rmax\}^2\sqrt{\frac{2\log(d\abs{\Theta}\delta^{-1})}{n}}= \vepsstat,
\end{align*}
with probability at least $1-\delta$.
As a result, we can write
\begin{align*}
  \nbr*{\ell(\theta) - \ellhat(\theta)}_{\infty}
  & = \nbr*{\En_{\mu}\sbr*{\phi(s,a)\rbr*{\phit(s,a)\theta - \gamma\phit(s',\pi)\theta - r(s,a)}} - \wh\En_{\mu}\sbr*{\phi(s,a)\rbr*{\phit(s,a)\theta - \gamma\phit(s',\pi)\theta - r(s,a)}}}_{\infty}
  \\
  & = \nbr*{\En_{\mu}\sbr*{\phi(s,a)B^\pi(s,a,s';\theta)} - \wh\En_{\mu}\sbr*{\phi(s,a)B^\pi(s,a,s';\theta)}}_{\infty}
  \\
  & \le \nbr*{\mathbf{1} \cdot \vepsstat}_{\infty}
  \\
  & \le \vepsstat.
\end{align*}
This concludes the proof.
\end{proof}
\subsection{Proof of Theorem \ref{thm:tournament}}
\begin{proof}
We first note that the proposed algorithm is equivalent to the following tournament procedure:
\begin{itemize}
    \item $\forall i \in [m],j \neq i:$
    \begin{itemize}
        \item Define $\phi_{i,j}(s,a):= [Q_i(s,a), Q_j(s,a)]^\top$ and associated $\hat{A}_{i,j}$ matrix and $\hat{b}_{i,j}$ vector (Eq. \ref{eq:lstd-a-b})
        \item Define $\hat{\ell}_{i,j} =  \hat{A}_{i,j} e_1 - \hat{b}_{i,j} \in \bbR^2$
    \end{itemize}
    \item Pick $\argmin_{i \in [m]} \max_{j \neq i} \nrm{\hat{\ell}_{i,j}}_\infty$
\end{itemize}

Let $i^\star \in [m]$ denote the index of $Q^{\pi}$ in the enumeration of $\cQ$. We start with the upper bound 
\[
\abs{J_{M^\star}(\pi) - \mathbb{E}_{s \sim d_0}[Q_{\hat{i}}(s,\pi)]} = \abs{\mathbb{E}_{s \sim d_0}[Q^{\pi}(s,\pi)] - \mathbb{E}_{s \sim d_0}[Q_{\hat{i}}(s,\pi)]}\leq \nrm{Q^{\pi}(\cdot)  - Q_{\hat{i}}(\cdot)}_\infty.
\]

Let $\ell_{i,j} \coloneqq A_{i,j}e_1 - b_{i,j}$ denote the population loss. We recall the concentration result from Equation \eqref{eq:lstd-concentration}, which, for any fixed $i$ and $j$, implies:
\[
  \nbr{\ell_{i,j} - \ellhat_{i,j}}_{\infty} \le \vepsstat = 3\cdot \max\{B_\phi, \Rmax\}^2 \cdot \sqrt{\frac{\log(2d\delta^{-1})}{n}},
\]
with probability at least $1-\delta$. This further implies $\abs{\nrm{\ell_{i,j}}_{\infty} - \nrm{\hat{\ell}_{i,j}}_{\infty}} \leq \vepsstat$. Taking a union bound over all $(i,j)$ where either $i$ or $j$ equal $i^\star$, this implies that
\[
\nbr{\ell_{i,j} - \ellhat_{i,j}}_{\infty} \le \vepsstat = 3\cdot \max\{B_\phi, \Rmax\}^2 \cdot \sqrt{\frac{\log(4dm\delta^{-1})}{n}} \quad \forall (i,j) \in \prn*{[m] \times \{i^\star\}} \cup \prn*{\{i^\star\} \times [m]}
\]
If $\hat{i} = i^\star$ then we are done. If not, then there exists a comparison in the tournament where $i = \hat{i}$ and $j = i^\star$. For these features $\phi_{\hat{i},i^\star}(s,a) = [Q_{\hat{i}}(s,a), Q_{i^\star}(s,a)]^\top$, we have:
\begingroup
\allowdisplaybreaks
\begin{align*}
 \nrm{Q^{\pi}(\cdot) - Q_{\hat{i}}(\cdot)}_\infty &=  \nrm{\phi^\top_{\hat{i},i^\star}(e_2 - e_1)}_\infty\\
 &= \nrm{\phi^\top_{\hat{i},i^\star}A^{-1}_{\hat{i},i^\star} A_{\hat{i},i^\star}(e_2-e_1)}_\infty\\
 &= \max_{s,a} \abs{\phi^\top_{\hat{i},i^\star}(s,a)A^{-1}_{\hat{i},i^\star} A_{\hat{i},i^\star}(e_2-e_1)} \\
 &\leq \prn*{\max_{s,a}\nrm{\phi_{\hat{i},i^\star}(s,a)}_2}\nrm{A^{-1}_{\hat{i},i^\star}}_2\nrm{A_{\hat{i},i^\star}(e_2-e_1)}_2\\
 &\leq \sqrt{d}\prn*{\max_{s,a}\nrm{\phi_{\hat{i},i^\star}(s,a)}_2}\nrm{A^{-1}_{\hat{i},i^\star}}_2\nrm{A_{\hat{i},i^\star}(e_2-e_1)}_\infty\\
 &= \sqrt{d}\prn*{\max_{s,a}\nrm{\phi_{\hat{i},i^\star}(s,a)}_2}\nrm{A^{-1}_{\hat{i},i^\star}}_2\nrm{A_{\hat{i},i^\star}e_1 - b_{\hat{i},i^\star}}_\infty\\
  &= \sqrt{d}\prn*{\max_{s,a}\nrm{\phi_{\hat{i},i^\star}(s,a)}_2}\nrm{A^{-1}_{\hat{i},i^\star}}_2\nrm{\ell_{\hat{i},i^\star}}_\infty\\
  &\leq \sqrt{d}\prn*{\max_{s,a}\nrm{\phi_{\hat{i},i^\star}(s,a)}_2}\nrm{A^{-1}_{\hat{i},i^\star}}_2\prn*{\nrm{\hat{\ell}_{\hat{i},i^\star}}_\infty + \vepsstat}\\
  &\leq \sqrt{d}\prn*{\max_{s,a}\nrm{\phi_{\hat{i},i^\star}(s,a)}_2}\nrm{A^{-1}_{\hat{i},i^\star}}_2\prn*{\max_{j \in [m]\setminus\crl{i^\star}}\nrm{\hat{\ell}_{\hat{i},j}}_\infty + \vepsstat}\\
  &\leq \sqrt{d}\prn*{\max_{s,a}\nrm{\phi_{\hat{i},i^\star}(s,a)}_2}\nrm{A^{-1}_{\hat{i},i^\star}}_2\prn*{\max_{j \in [m]\setminus\crl{i^\star}}\nrm{\hat{\ell}_{i^\star,j}}_\infty + \vepsstat}\\
  &\leq \sqrt{d}\prn*{\max_{s,a}\nrm{\phi_{\hat{i},i^\star}(s,a)}_2}\nrm{A^{-1}_{\hat{i},i^\star}}_2\prn*{\max_{j \in [m]\setminus\crl{i^\star}}\nrm{\ell_{i^\star,j}}_\infty + 2\vepsstat}\\
  &\leq 2\sqrt{d}\prn*{\max_{s,a}\nrm{\phi_{\hat{i},i^\star}(s,a)}_2}\nrm{A^{-1}_{\hat{i},i^\star}}_2 \vepsstat.\\
  &\leq 2\sqrt{d}\prn*{\max_{s,a}\nrm{\phi_{\hat{i},i^\star}(s,a)}_2}\max_{i \in [m] \setminus \crl{i^\star}}\frac{1}{\sigma_{\min}(A_{i,i^\star})} \vepsstat.
\end{align*}
\endgroup

To conclude, we note that $d=2$ in our application and that $
\max_{s,a}\nrm{\phi_{i,j}(s,a)}^2_2 = Q_i^2(s,a) + Q_j^2(s,a) \leq 2\Vmax^2$. Plugging in the value for $\vepsstat$, this gives a final bound of 
\begin{align*}
\abs{J_{M^\star}(\pi) - \mathbb{E}_{s \sim d_0}[Q_{\hat{i}}(s,\pi)]} &\leq 4 \Vmax \max_{i \in [m] \setminus \crl{i^\star}}\frac{1}{\sigma_{\min}(A_{i,i^\star})} \vepsstat \\
&= 24 \Vmax^3 \max_{i \in [m] \setminus \crl{i^\star}}\frac{1}{\sigma_{\min}(A_{i,i^\star})}  \sqrt{\frac{\log(4dm\delta^{-1})}{n}}.
\end{align*}



\end{proof}

\subsection{Proof of Theorem \ref{thm:model}}
We bound
\begin{align*}
  J(\pi) - \En_{d_{0}}\sbr*{\Qhat(s,\pi)}  
  & = \En_{d_{0},\pi}\sbr*{\Qpi(s,a) - \Qhat(s,a)}
  \\
  & = \frac{1}{1-\gamma} \En_{d^\pi}\sbr*{\Qpi(s,a) - \gamma \Qpi(s',\pi) - \Qhat(s,a) - \gamma\Qhat(s',\pi)}
  \\
  & = \frac{1}{1-\gamma} \En_{d^\pi}\sbr*{\Qpi(s,a)-\sbr*{\Tpi\Qpi}(s,a)-\Qhat(s,a) + \sbr*{\Tpi\Qhat}(s,a)}
  \\
  & =  \frac{1}{1-\gamma} \En_{d^\pi}\sbr*{\sbr*{\Tpi\Qhat}(s,a) - \Qhat(s,a)}
  \\
  & \le \frac{1}{1-\gamma} \sqrt{\Cone\cdot\En_{\data}\sbr*{\rbr*{\sbr*{\Tpi\Qhat}(s,a) - \Qhat(s,a)}^2}} 
\end{align*}
where the second line follows from Bellman flow. 
Now we consider the term under the square root, 
and let $\ghat = \argmin_{g\in\cG_{\Qhat}} \ellhat(g, \Qhat)$.
\begin{align*}
 \En_{\data}\sbr*{\rbr*{\sbr*{\Tpi\Qhat}(s,a) - \Qhat(s,a)}^2}
 \le 2\cdot\underbrace{\En_{\data}\sbr*{\rbr*{\sbr*{\Tpi\Qhat}(s,a) - \ghat(s,a)}}^2}_{\mathrm{(T{1})}} 
 + 2\cdot\underbrace{\En_{\data}\sbr*{\rbr*{\ghat(s,a) - \Qhat(s,a)}^2}}_{\mathrm{(T{2})}} 
\end{align*}

We consider each term above individually.
$(T 1)$ is the regression error between $\ghat[Q]$
and the population regression solution $\Tpi Q$, 
which we can control using well-established bounds.
The second term $(T 2)$ measure how close the Q-value is 
to its estimated Bellman backup. 
To bound these two terms we utilize the following results.
The first controls the error between the squared-loss minimizer $\ghat[Q]$ 
and the population solution $\Tpi Q$, and is adapted from \cite{xie2020batch}.
\begin{lemma}[Lemma 9 from \cite{xie2020batch}]\label{lem:model-reg-concentration}
  Suppose that we have $\abr*{g}_\infty \le \Vmax$ 
  for all $g\in\cG_{Q}$ and $Q \in \cQ$,
  and define 
  \begin{align*}
    \ghat[Q] \ldef{} \argmin_{g\in\cG_Q} \bbE_\cD \sbr*{\rbr*{g(s,a)-r-\gamma Q(s',\pi)}^2}.
  \end{align*}
  Then with probability at least $1-\delta$, for all $i \in [m]$ we have 
  \begin{align*}
    \En_\data \sbr*{\rbr*{ \ghat[Q](s,a) - \sbr*{\Tpi Q}(s,a)}^2} 
    \le \frac{16\Vmax^2\log\rbr*{\frac{2m}{\delta} }}{n}
    \ldef{} \vepssqs.
  \end{align*}
\end{lemma}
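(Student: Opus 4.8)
The plan is to prove the standard fast-rate guarantee for least-squares regression over a finite, realizable hypothesis class, specialized to the conditional-mean target. First I would identify the population regression function: writing $Y \ldef r + \gamma Q(s',\pi)$ for the regression target, the data-generating process ($(s,a)\sim\data$, $r=R(s,a)$, $s'\sim P^\star(\cdot\mid s,a)$) gives $\En_\data[Y\mid s,a] = (\Tpi Q)(s,a)$, so the usual bias--variance identity yields $\En_\data[(g-Y)^2] - \En_\data[(\Tpi Q - Y)^2] = \En_\data[(g-\Tpi Q)^2]$ for every $g$. Crucially, Proposition~\ref{prop:gi} guarantees $\Tpi Q \in \Gcal_Q$, so the class is realizable and the quantity to be bounded, $\Delta \ldef \En_\data[(\ghat[Q]-\Tpi Q)^2]$, is exactly the population excess risk of the empirical risk minimizer $\ghat[Q]$.

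The heart of the argument is a one-sided uniform deviation bound with Bernstein (variance-to-mean) structure, which is what produces the $O(1/n)$ rate rather than the slow $O(n^{-1/2})$ rate. For each $g\in\Gcal_Q$ I define the per-sample loss difference $Z_g \ldef (g-Y)^2 - (\Tpi Q - Y)^2$, so that $\En_\data[Z_g] = \Delta_g \ldef \En_\data[(g-\Tpi Q)^2]\ge 0$. Factoring $Z_g = (g-\Tpi Q)(g+\Tpi Q - 2Y)$ and using the boundedness $|g|,|\Tpi Q|,|Y|\le \Vmax$ bounds the second factor by $4\Vmax$; this simultaneously gives a range bound $|Z_g|\le 8\Vmax^2$ and, more importantly, the second-moment bound $\En_\data[Z_g^2]\le 16\Vmax^2\,\Delta_g$, which is the source of the constant $16$. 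Bernstein's inequality applied to the i.i.d.\ average $\En_\Dcal[Z_g]$, together with a union bound over the $m$ members of $\Gcal_Q$, then gives $\Delta_g - \En_\Dcal[Z_g] \le c\sqrt{\Vmax^2 \Delta_g \log(2m/\delta)/n} + c\,\Vmax^2\log(2m/\delta)/n$ simultaneously over $g\in\Gcal_Q$; repeating over the $m$ candidate $Q$'s supplies the ``for all $i\in[m]$'' uniformity.

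To close, I would invoke empirical optimality of the minimizer: since $\Tpi Q\in\Gcal_Q$, we have $\En_\Dcal[(\ghat[Q]-Y)^2]\le \En_\Dcal[(\Tpi Q-Y)^2]$, i.e.\ $\En_\Dcal[Z_{\ghat[Q]}]\le 0$. Plugging the empirical optimum into the deviation bound leaves the self-bounding inequality $\Delta \le c\sqrt{\Vmax^2\Delta\log(2m/\delta)/n} + c\,\Vmax^2\log(2m/\delta)/n$; solving this quadratic in $\sqrt{\Delta}$ collapses the square-root term and produces a bound of the form $\Delta \le c'\,\Vmax^2\log(2m/\delta)/n$, with the stated constant ($16$) obtained by carrying the second-moment bound $\En_\data[Z_g^2]\le 16\Vmax^2\Delta_g$ through the calculation.

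The main obstacle is the fast-rate machinery rather than any individual estimate: one must set up the variance-to-mean bound so that Bernstein's variance term scales with the (a priori unknown) mean $\Delta_g$, and then correctly solve the resulting self-bounding inequality, since a naive Hoeffding bound would only control $|\En_\data[Z_g] - \En_\Dcal[Z_g]|$ by a constant times $n^{-1/2}$ and would give the wrong rate on the squared error. The remaining work is bookkeeping: tracking the numerical constant through the quadratic and confirming that the union bound (over $\Gcal_Q$, and over the candidates) lands on exactly $\log(2m/\delta)$.
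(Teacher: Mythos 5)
Your proposal is structurally correct, and it is essentially the same argument as the one behind this lemma in the literature: note that the paper itself does not prove the statement but imports it from \citet{xie2020batch}, and the proof there is exactly your route --- realizability $\Tpi Q\in\cG_Q$ (Proposition~\ref{prop:gi}), the loss-difference variables $Z_g$ with variance bounded by a multiple of their mean, Bernstein plus a union bound, the ERM inequality $\En_{\cD}[Z_{\ghat[Q]}]\le 0$, and a self-bounding quadratic.

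Two pieces of your bookkeeping do not land where you claim, though neither affects the $1/n$ rate. First, the uniformity is over $m^2$ pairs, not $m$: each candidate $Q_i$ carries its own class $\cG_{Q_i}$ of size $m$, so the union bound produces $\log(2m^2/\delta)$ (at most $2\log(2m/\delta)$), not ``exactly $\log(2m/\delta)$.'' Second, carrying your bounds $\abs{Z_g}\le 8\Vmax^2$ and $\En_\data[Z_g^2]\le 16\Vmax^2\Delta_g$ through one-sided Bernstein gives, with $L=\log(1/\delta')$,
\begin{align*}
\Delta \;\le\; \sqrt{\frac{32\Vmax^2\Delta L}{n}} \;+\; \frac{16\Vmax^2 L}{3n},
\end{align*}
and solving this quadratic yields $\Delta \le \tfrac{128}{3}\,\Vmax^2 L/n$ (a constant of roughly $43$, not $16$): the variance factor $16\Vmax^2$ does not reappear as the final constant, contrary to your last step. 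To get near the stated constant you should exploit the sharper ranges available in this setting: rewards are non-negative and $\Vmax=\Rmax/(1-\gamma)$, so $g$, $\Tpi Q$, and $Y=r+\gamma Q(s',\pi)$ all lie in $[0,\Vmax]$, whence $\abs{g+\Tpi Q-2Y}\le 2\Vmax$, $\abs{Z_g}\le 2\Vmax^2$, and $\En_\data[Z_g^2]\le 4\Vmax^2\Delta_g$; the same quadratic then gives $\Delta\le \tfrac{32}{3}\,\Vmax^2 L/n$. Even so, the exact pair $(16,\,\log(2m/\delta))$ in the statement is only matched up to the $m^2$-versus-$m$ slack in the logarithm --- that looseness sits in the paper's adaptation of the cited lemma, not in your structure.
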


The second controls the error of estimating the objective for choosing $\ihat$ 
from finite samples, and a proof is included at the end of this section. 
\begin{lemma}[Objective estimation error]\label{lem:model-obj-concentration}
  Suppose that we have $\nbr*{g}_\infty \le \Vmax$ 
  for all $g\in\cG_{Q}$ and $Q \in \cQ$.
  Then with probability at least $1-\delta$, 
  for all $g\in\cG_{Q}$ and $Q \in \cQ$ we have 
  \begin{align*}
    &\max\Bigg\{
      \frac{1}{2}\cdot \En_{\data}\sbr*{\rbr*{g(s,a) - Q(s,a)}^2 } - \En_{\cD}\sbr*{\rbr*{g(s,a) - Q(s,a)}^2}~,
      \\
      & \qquad\quad\En_{\cD}\sbr*{\rbr*{g(s,a) - Q(s,a)}^2 } - \frac{3}{2} \cdot\En_{\data}\sbr*{\rbr*{g(s,a) - Q(s,a)}^2}
    \Bigg\}
     \le \frac{3\Vmax^2\log\rbr*{\frac{2m}{\delta}}}{n} \ldef{} \vepsobj.
  \end{align*}
\end{lemma}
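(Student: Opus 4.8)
The plan is to prove a one-sided \emph{relative} (multiplicative) concentration bound for the squared discrepancy $Z_{g,Q}(s,a) \coloneqq (g(s,a) - Q(s,a))^2$ by combining Bernstein's inequality with the self-bounding property of squared losses. First I would fix a single pair $(g,Q)$ with $g \in \cG_Q$ and $Q \in \cQ$, and record two structural facts about the nonnegative random variable $Z_{g,Q}$ under $(s,a)\sim\data$: it is bounded, $0 \le Z_{g,Q} \le \Vmax^2$ (both $g$ and $Q$ lie in $[0,\Vmax]$), and its second moment is controlled by its mean, $\En_{\data}[Z_{g,Q}^2] \le \Vmax^2\, \En_{\data}[Z_{g,Q}]$, because $Z_{g,Q}^2 \le \Vmax^2 Z_{g,Q}$ pointwise. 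The latter yields the key variance-to-mean inequality $\mathrm{Var}_{\data}(Z_{g,Q}) \le \Vmax^2\, \En_{\data}[Z_{g,Q}]$, which is what converts an additive deviation into a multiplicative one.

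Next I would apply Bernstein's inequality to the i.i.d.\ sample of $Z_{g,Q}$ in both tail directions. Writing $\bar Z \coloneqq \En_{\data}[Z_{g,Q}]$, $\widehat Z \coloneqq \En_{\cD}[Z_{g,Q}]$, and $L \coloneqq \Vmax^2 \log(2/\delta')/n$, Bernstein together with the variance bound and the range bound $b=\Vmax^2$ gives, with probability $\ge 1-\delta'$, that $|\widehat Z - \bar Z| \le \sqrt{2\bar Z L} + L/3$. The crucial step is to linearize the $\sqrt{\bar Z}$ term via AM--GM, $\sqrt{2\bar Z L} \le \tfrac12 \bar Z + L$. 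The upper tail then reads $\widehat Z \le \tfrac32 \bar Z + \tfrac43 L$, i.e.\ $\widehat Z - \tfrac32 \bar Z \le \tfrac43 L$, and the lower tail gives $\tfrac12 \bar Z - \widehat Z \le \tfrac43 L$; these are precisely the two quantities inside the $\max$ in the statement.

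Finally I would make the bound uniform by a union bound over the (at most $m^2$) relevant pairs $(g,Q)$ arising from $g\in\cG_Q$, $Q\in\cQ$. Setting $\delta' = \delta/m^2$ turns $\log(2/\delta')$ into $\log(2m^2/\delta)$, and since $\tfrac43\log(2m^2/\delta) \le 3\log(2m/\delta)$ for $\delta \le 1$, the deterministic term satisfies $\tfrac43 L \le 3\Vmax^2\log(2m/\delta)/n = \vepsobj$, which is exactly the claimed bound (if only a fixed $Q$ is needed in the application, the same accounting with $m$ pairs is even looser).

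The main obstacle here is not the concentration inequality itself but obtaining the correct \emph{relative} form with the precise constants $\tfrac12$ and $\tfrac32$: this hinges on recognizing that the variance of a squared loss is self-bounding by its mean (a plain Hoeffding bound, using only the range, would give an additive bound with no such factors), and then tuning the AM--GM split so that the coefficient on $\bar Z$ is exactly $\tfrac12$ in each direction while the leftover deterministic term folds cleanly into $\vepsobj$. A secondary point to handle carefully is the cardinality of the union bound, so that the final logarithmic factor reads $\log(2m/\delta)$ as stated, which I would justify by the elementary inequality $\tfrac43\log(2m^2/\delta)\le 3\log(2m/\delta)$.
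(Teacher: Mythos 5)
Your proposal is correct and follows essentially the same route as the paper's proof: bound the variance of $(g-Q)^2$ by $\Vmax^2\,\En_{\data}[(g-Q)^2]$ via the self-bounding/fourth-moment observation, apply Bernstein's inequality, linearize the square-root term by AM--GM to get the $\tfrac12$/$\tfrac32$ multiplicative form, and finish with a union bound. If anything, your accounting of the union bound over the $m^2$ pairs $(g,Q)$ (and the check that $\tfrac43\log(2m^2/\delta)\le 3\log(2m/\delta)$) is more explicit than the paper's, which invokes the union bound with $\log(2m/\delta)$ directly.
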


Using Lemma~\ref{lem:model-reg-concentration}, we directly obtain that with probability at $1-\delta$,
\begin{align*}
  \mathrm{(T 1)} \le \vepssqs.
\end{align*}

By leveraging Lemma~\ref{lem:model-obj-concentration}, we have that with probability at least $1-\delta$,
\begin{align*}
  \mathrm{(T 2)}
  & = \En_{\data}\sbr*{\rbr*{\ghat(s,a) - \Qhat(s,a)}^2}
  \\
  & \le 2\cdot\vepsobj + 2\cdot\En_{\cD}\sbr*{\rbr*{\ghat[\Qhat](s,a) - \Qhat(s,a)}^2}  
  \\
  & \le 2\cdot\vepsobj + 2\cdot\En_{\cD}\sbr*{\rbr*{\ghat[\Qpi](s,a) - \Qpi(s,a)}^2}
  \\
  & \le 4\cdot\vepsobj + 3\cdot\En_{\mu}\sbr*{\rbr*{\ghat[\Qpi](s,a) - \Qpi(s,a)}^2}
  \\
  & = 4\cdot\vepsobj + 3\cdot\En_{\mu}\sbr*{\rbr*{\ghat[\Qpi](s,a) - [\Tpi\Qpi](s,a)}^2}
  \\
  & \le 4\cdot\vepsobj + 3\cdot\vepssqs
\end{align*}
where in the first inequality we apply Lemma \ref{lem:model-obj-concentration}
(by lower bounding the LHS with the first expression in the $\max$); 
in the second we use the Q-value realizability assumption $\Qpi \in \cQ$
with the fact that $\Qhat$ is the minimizer of the empirical objective; 
and in the third we again apply Lemma \ref{lem:model-obj-concentration}
(now lower bounding the LHS with the second expression in the $\max$).
Then we use the identity that $\Qpi = \Tpi\Qpi$,
and apply the squared-loss regression guarantee. 
The bounds for (T1) and (T2) mean that
\begin{align*}
  \En_{\data}\sbr*{\rbr*{\sbr*{\Tpi\Qhat}(s,a) - \Qhat(s,a)}^2}
  & \le 8\rbr*{\vepsobj + \vepssqs}, 
\end{align*}
resulting in the final estimation bound of 
\begin{align*}
 J(\pi) - \En_{d_{0}}\sbr*{\Qhat(s,\pi)} 
 & \le  \frac{1}{1-\gamma} \sqrt{\Cone\cdot\En_{\data}\sbr*{\rbr*{\sbr*{\Tpi\Qhat}(s,a) - \Qhat(s,a)}^2}} 
 \\
 & \le  \frac{1}{1-\gamma} \sqrt{8\cdot\Cone\cdot\rbr*{\vepsobj + \vepssqs}},
 \\
 & = \frac{\Vmax}{1-\gamma} \sqrt{\frac{152\cdot\Cone\cdot\log\rbr*{\frac{2m}{\delta} }}{n}},
\end{align*}
which holds with probability at least $1-2\delta$.

\begin{proof}[Proof of Lemma \ref{lem:model-obj-concentration}]
  Observe that the random variable $\rbr*{g(s,a)-Q(s,a)}^2 \in [-\Vmax^2, \Vmax^2]$,
  and
  \begin{align*}
    \bbV_\data\sbr*{\rbr*{g(s,a)-Q(s,a)}^2}
    & \le \En_\data\sbr*{\rbr*{g(s,a)-Q(s,a)}^4}
    \\
    & \le \Vmax^2\cdot \En_\data\sbr*{\rbr*{g(s,a)-Q(s,a)}^2}.
  \end{align*}
  Then, applying Bernstein's inequality with union bound, 
  we have that, for any $g \in \cG_Q$ and $Q \in \cQ$ with probability at least $1-\delta$,
  \begin{align*}
    &\abr*{ \En_{\data}\sbr*{\rbr*{g(s,a) - Q(s,a)}^2 }- \En_{\cD}\sbr*{\rbr*{g(s,a) - Q(s,a)}^2}}  
    \\
    & \le \sqrt{\frac{4\bbV_\data \sbr*{\rbr*{g(s,a) - Q(s,a)}^2}\log\rbr*{\frac{2m}{\delta}}}{n}}
    + \frac{\Vmax^2 \log\rbr*{\frac{2m}{\delta}}}{n}
    \\
    & \le  \sqrt{\frac{4\Vmax^{2}\bbE_\data \sbr*{\rbr*{g(s,a) - Q(s,a)}^2}\log\rbr*{\frac{2m}{\delta}}}{n}}
    + \frac{\Vmax^2 \log\rbr*{\frac{2m}{\delta}}}{n}
    \\
    & \le \frac{\bbE_\data \sbr*{\rbr*{g(s,a) - Q(s,a)}^2}}{2} 
    + \frac{3\Vmax^2\log\rbr*{\frac{2m}{\delta}}}{n}.
  \end{align*}
  Expanding the absolute value on the LHS and rearranging, this then implies that
  \begin{align*}
     \frac{1}{2} \cdot\En_{\data}\sbr*{\rbr*{g(s,a) - Q(s,a)}^2 } 
     & \le \En_{\cD}\sbr*{\rbr*{g(s,a) - Q(s,a)}^2}
     +  \frac{3\Vmax^2\log\rbr*{\frac{2m}{\delta}}}{n},
     \\
     \En_{\cD}\sbr*{\rbr*{g(s,a) - Q(s,a)}^2 } 
     & \le \frac{3}{2} \cdot\En_{\data}\sbr*{\rbr*{g(s,a) - Q(s,a)}^2}
     +  \frac{3\Vmax^2\log\rbr*{\frac{2m}{\delta}}}{n}. 
  \end{align*}
  Combining these statements completes the proof. 
\end{proof}

\subsection{Proof of Theorem \ref{thm:signed}}
We bound
\begin{align*}
  J(\pi) - \En_{d_{0}}\sbr*{\Qhat(s,\pi)}  
  & = \En_{d_{0},\pi}\sbr*{\Qpi(s,a) - \Qhat(s,a)}
  \\
  & = \frac{1}{1-\gamma} \En_{d^\pi}\sbr*{\Qpi(s,a) - \gamma \Qpi(s',\pi) - \Qhat(s,a) - \gamma\Qhat(s',\pi)}
  \\
  & = \frac{1}{1-\gamma} \En_{d^\pi}\sbr*{\Qpi(s,a)-\sbr*{\Tpi\Qpi}(s,a)-\Qhat(s,a) + \sbr*{\Tpi\Qhat}(s,a)}
  \\
  & =  \frac{1}{1-\gamma} \En_{d^\pi}\sbr*{\sbr*{\Tpi\Qhat}(s,a) - \Qhat(s,a)}
  \\
  & \le \frac{\Cinf}{1-\gamma} \cdot \En_{\data}\sbr*{\abr*{\sbr*{\Tpi\Qhat}(s,a) - \Qhat(s,a)}} 
  \\
  & \le \max_{g\in\cG_{\Qhat}} \En_{\data}\sbr*{\sgn\rbr*{\Qhat(s,a) - g(s,a)}\rbr*{\Qhat(s,a) - r - \gamma\Qhat(s',\pi)}}
\end{align*}
By assumption, 
$\max_{q\in\cQ} \nbr*{q}_\infty \le \Vmax$,
and similarly $\max_{g\in\cG_Q} \nbr*{g}_\infty \le \Vmax$ for all $Q \in \cQ$.
Then for any $Q\in\cQ$ and $g \in \cG_Q$ 
and $(s,a)\in \cS\times\cA$ and $r \in [0, \Rmax]$,
\begin{align*}
  {\sgn\rbr*{Q(s,a)-g(s,a)}\rbr*{Q(s,a) - r - \gamma Q(s',\pi)}} \in \sbr*{-\Vmax, \Vmax}, 
\end{align*}
and, using Hoeffding's inequality, 
we have  for all $Q \in \cQ$ and $g \in \cG_Q$ that, 
with probability at least $1-\delta$,
\begin{align*}
  &\Big| \En_{\data}\sbr*{\sgn\rbr*{\Qhat(s,a) - g(s,a)}\rbr*{\Qhat(s,a) - r - \gamma\Qhat(s',\pi)}} \\ 
  &~ -  \En_{\cD}\sbr*{\sgn\rbr*{\Qhat(s,a) - g(s,a)}\rbr*{\Qhat(s,a) - r - \gamma\Qhat(s',\pi)}} \Big|
  %
  \le 2\Vmax\sqrt{\frac{\log\rbr*{\frac{2m}{\delta}}}{n}}
  \ldef{} \vepsobj.
\end{align*}
Then using this concentration in the last line of the previous block, 
\begin{align*}
  J(\pi) - \En_{d_{0}}\sbr*{\Qhat(s,\pi)}  
  & \le  \max_{g\in\cG_{\Qhat}}{}\En_{\cD}\sbr*{\sgn\rbr*{\Qhat(s,a) - g(s,a)}\rbr*{\Qhat(s,a) - r - \gamma\Qhat(s',\pi)}}
  + \vepsobj
  \\
  & \le  \max_{g\in\cG_{\Qpi}}{}\En_{\cD}\sbr*{\sgn\rbr*{\Qpi(s,a) - g(s,a)}\rbr*{\Qpi(s,a) - r - \gamma\Qpi(s',\pi)}}
  + \vepsobj
  \\
  & \le  \max_{g\in\cG_{\Qpi}}{}\En_{\data}\sbr*{\sgn\rbr*{\Qpi(s,a) - g(s,a)}\rbr*{\Qpi(s,a) - r - \gamma\Qpi(s',\pi)}}
  + 2\cdot\vepsobj
  \\
  & =  \max_{g\in\cG_{\Qpi}}{}\En_{\data}\sbr*{\sgn\rbr*{\Qpi(s,a) - g(s,a)}\rbr*{\Qpi(s,a) - \sbr*{\Tpi \Qpi}(s,a)}}
  + 2\cdot\vepsobj
  \\
  & =  \max_{g\in\cG_{\Qpi}}{}\En_{\data}\sbr*{\sgn\rbr*{\Qpi(s,a) - g(s,a)}\rbr*{\Qpi(s,a) - \Qpi(s,a)}}
  + 2\cdot\vepsobj
  \\
  & = 2\cdot\vepsobj,  
\end{align*}
where in the first and third inequalities we apply the above concentration inequality, 
and in the second inequality we use the fact that 
$\Qhat$ is the minimizer of the empirical objective, i.e., 
\begin{align*}
  \Qhat = \argmin_{Q \in \cQ}\max_{g \in \cG_Q}\En_{\cD}\sbr*{\sgn\rbr*{Q(s,a) - g(s,a)}\rbr*{Q(s,a) - r - Q(s',\pi)}}.
\end{align*}
Combining the above inequalities, we obtain the theorem statement, 
\begin{align*}
  J(\pi) - \En_{d_{0}}\sbr*{\Qhat(s,\pi)}   
  \le 
  2\cdot\Cinf \cdot \vepsobj. 
\end{align*}

\subsection{Further Discussion on Comparison to BVFT and the Coverage Assumptions}
\label{app:bvft-compare}


Besides the difference in rates, the guarantees for \lstd and BVFT differ in their coverage assumptions, which also differ from those for the model-based selectors (Theorems~\ref{thm:model} and \ref{thm:signed}). In fact, Theorems~\ref{thm:model} and \ref{thm:signed} use the standard ``concentrability coefficient'' $\Cinf$ that is widely adopted in offline RL theory \cite{antos2008learning, chen2019information}. In contrast, \lstd's coverage parameter is $\max_{i \in [m] \setminus \crl{i^\star}}\frac{1}{\sigma_{\min}(A_{i,i^\star})}$, which is a highly specialized notion of coverage inherited from that of LSTDQ ($\sigma_{\min}(A)$ from Theorem~\ref{thm:lstd}). Similarly, the coverage parameter of BVFT is also a notion of coverage (called ``aggregated concentrability'' \cite{xie2020batch, jia2024offline}) specialized to state abstractions. As far as we know there is no definitive relationship between the coverage parameters of \lstd and BVFT and neither dominates the other. Furthermore, these non-standard, highly algorithm-specific coverage definitions are a common situation for algorithms that only require realizability (instead of Bellman-completeness) as their expressivity condition, as discussed in \citet{jiang2024offline}. 

\paragraph{Problems in $\sigma_{\min}(A)$ as LSTDQ's Coverage Assumption} As mentioned above, the coverage condition of \lstd is inherited from its ``base'' algorithm, i.e., the $\sigma_{\min}(A)$ term in LSTDQ. Despite its wide use in the analyses of LSTDQ algorithms \cite{lazaric2012finite, amortila2023optimal, perdomo2023complete}, we discover several problems with this definition that warrants further investigation. First, the quantity is not invariant to reparameterization, in the sense that if we scale the linear features $\phi$ with a constant (and scale $\theta$ accordingly), $\sigma_{\min}(A)$ will also change despite that the estimation problem essentially remains the same. Second, if $\phi$ has linearly dependent coordinates, $\sigma_{\min}(A)$ can be $0$ while this says nothing about the quality of data (which could have perfect coverage); this is usually not a concern in LSTDQ analysis since we can always pre-process the features, but for its application in \lstd the lack of linear independence can happen if we have nearly identical candidate functions $Q_i \approx Q_j$. Last but not least, we lack good understanding of when $\sigma_{\min}(A)$ is well-behaved outside very special cases; for example, it is well-known that fully on-policy data (in the sense that $\mu$ is invariant to the transition dynamics under $\pi$) implies lower-bounded $\sigma_{\min}(A)$. However, as soon as $\mu$ is off-policy, there lacks general characterization of what kind of $\mu$ helps lower-bound $\sigma_{\min}(A)$. For example, when concentrability coefficient $\Cinf$ is well bounded, it is unclear whether that implies lower-bounded $\sigma_{\min}(A)$. 

In this project we have explored ways to mitigate these issues, such as defining coverage as $\sigma_{\min}(\Sigma) / \sigma_{\min}(A)$ to avoid the linear-dependence issue. However, none of the resolutions we tried solve all problems elegantly. This question is, in fact, quite orthogonal to the central message of our work and of separate interest in the fundamental theory of RL, and speaks to how algorithms like LSTDQ are seriously under-investigated. In fact, coverage in state abstractions has been similarly under-investigated and has only seen interesting progress recently \citep{jia2024offline}. We wish to investigate the question for LSTDQ in the future, and any progress there can be directly inherited to improve the analysis of \lstd.




\section{Experiment Details} \label{app:exp_detail}
\subsection{Environment Setup: Noise and State Resetting} \label{app:env}
\paragraph{State Resetting} Monte-Carlo rollouts for Q-value estimation rely on the ability to (re)set the simulator to a particular state from the offline dataset. 
To the best of our knowledge, Mujoco environment does not natively support state resetting, and assigning values to the observation vector does not really change the underlying state.  However, state resetting can still be implemented by manually assigning the values of  the position vector \texttt{qpos} and the velocity vector \texttt{qvel}. 

\paragraph{Noise} As mentioned in Section~\ref{sec:exp}, we add noise to Hopper to create more challenging stochastic environments and create model selection tasks where candidate simulators have different levels of stochasticity. Here we provide the details about how we inject randomness into the deterministic dynamics of Hopper. 
Mujoco engine realizes one-step transition by leveraging \texttt{mjData.\{ctrl, qfrc\_applied, xfrc\_applied\}} objects \cite{mujoco_doc}, where \texttt{mjData.ctrl} corresponds to the action taken by our agent, and \texttt{mjData.\{qfrc\_applied, xfrc\_applied\}} are the user-defined perturbations in the joint space and Cartesian coordinates, respectively. 
To inject randomness into the transition at a noise level of $\sigma$, we first sample an isotropic Gaussian noise with variance $\sigma^2$ as the stochastic force in \texttt{mjData.xfrc[:3]} upon each transition, which jointly determines the next state with the input action \texttt{mjData.ctrl}, leaving the joint data \texttt{mjData.qfrc} intact. 

\subsection{Experiment Settings} \label{app:setup}
\paragraph{MF/MB.G/N} The settings of different experiments are summarized in Table~\ref{tab:exp_detail}. We first run DDPG in the environment of $g=-30,\sigma=32$, and obtain 15 deterministic policies $\{\pi_{0:14}\}$ from the checkpoints. The first 10 are used as target policies in MF.G/N experiments, and MB.G/N use fewer due to the high computational cost. For the main results (Section~\ref{sec:exp-main}), the choice of $M^\star$ is usually the two ends plus the middle point of the grid ($\Mcal_\grav$ or $\Mcal_\noise$). The corresponding behavior policy is an epsilon-greedy version of one of the target policies, denoted as $\pi_i^\epsilon$, which takes the deterministic action of $\pi_i(s)$ with probability 0.7, and add a unit-variance Gaussian noise to $\pi_i(s)$ with the remaining 0.3 probability. 

\paragraph{MF/MB.Off.G/N} In the above setup, the behavior and the target policies all stem from the same DDPG training procedure. While these policies still have significant differences (see Figure~\ref{fig:sanity}L), the distribution shift is relatively mild. For the data coverage experiments (Section~\ref{sec:exp-coverage}), we prepare a different set of behavior policies that intentionally offer poor coverage: these policies, denoted as $\pi_i^{\textrm{poor}}$, are obtained by running DDPG with a different neural architecture (than the one used for generating $\pi_{0:14}$) in a different environment of $g=-60, \sigma=100$. We also provide the parallel of our main experiments in Figures~\ref{fig:mainfigure} and \ref{fig:mb} under these behavior policies with poor coverage in Appendix~\ref{app:poor-coverage}. 

\paragraph{MF.T.G} This experiment is for data coverage (Section~\ref{sec:exp-coverage}), where $\Dcal$ is a mixture of two datasets, one sampled from $\pi_7$ (which is the sole target policy being considered) and one from $\pi_i^{\textrm{poor}}$ that has poor coverage. They are mixed together under different ratios as explained in Section~\ref{sec:exp-coverage}. 

\begin{table}[h]
    \centering
    \begin{tabular}{c l c c c c}
    \toprule
    \textbf{} & 
    \textbf{Gravity $\grav$} & \textbf{Noise Level $\sigma$} & \makecell{\textbf{Groundtruth Model $M^\star$ and}\\\textbf{Behavior Policy} $\pi_b$} & \makecell{\textbf{Target}\\\textbf{Policies $\Pi$}} \\
    \midrule
    MF.G \label{mf.on.g} 
    & $\text{LIN}(-51, -9, 15)$ & 100 & $\{(M_i, \pi_i^{\epsilon}), i\in \{0,7,14\}\}$ & $\{\pi_{0:9}\}$ \\
    MF.N \label{mf.on.n} 
    & -30 & $\text{LIN}(10,100,15)$ & $\{(M_i, \pi_i^{\epsilon}), i\in \{0,7,14\}\}$ & $\{\pi_{0:9}\}$ \\
        MB.G \label{mbg} 
    & $\text{LIN}(-36,-24,5)$ & 100 & $\{(M_i, \pi_i^{\epsilon}
    ), i\in \{0,2,4\}\}$ & $\{\pi_{0:5}\}$ \\
    MB.N \label{mbn} 
    & -30 & $\text{LIN}(10,100,5)$ & $\{(M_i, \pi_i^{\epsilon}), i\in \{0,2,4\}\}$ & $\{\pi_{0:5}\}$  \\
    MF.OFF.G \label{mf.off.g} 
    & $\text{LIN}(-51, -9, 15)$ & 100 & $\{(M_i, \pi_i^{\textrm{poor}}), i\in \{0,7,14\}\}$ & $\{\pi_{0:9}\}$  \\
    MF.OFF.N \label{mf.off.n} 
    & -30 & $\text{LIN}(10,100,15)$ & $\{(M_i, \pi_i^{\textrm{poor}}), i\in \{0,7,14\}\}$ & $\{\pi_{0:9}\}$  \\
    MF.T.G \label{mf.off.g} 
    & $\text{LIN}(-51, -9, 15)$ & 100 & $\{(M_i, \pi_8 \textrm{~\&~} \pi_i^{\textrm{poor}}), i\in \{0,7,14\}\}$ & $\{\pi_8\}$ \\
    \bottomrule
    \end{tabular}
    \caption{Details of experiment settings.  $\text{LIN}(a,b,n)$ (per numpy convention) refers to the arithmetic sequence with $n$ elements, starting from $a$ and ending in $b$ (e.g.~$\text{LIN}(0,1,6) = \{0, 0.2, 0.4, 0.6, 0.8, 1.0\}$).  \label{tab:exp_detail} }
\end{table}

\subsection{Computational Resources} \label{app:resource}

We here provide a brief discussion of the amount of computation used to produce the results. The major cost is in Q-caching, i.e., rolling out trajectories. The cost consists of two parts: environment simulation steps (CPU-intensive) and neural-net inference for policy calls (GPU-intensive). The main experiment on \textbf{MF.G}, 
took nearly a week on a 4090 PC. The runtimes of \textbf{MF.N}, \textbf{MB.G}, \textbf{MB.N} are comparable to \textbf{MF.G}. 

\section{Additional Experiment Results} \label{app:add_results}

\paragraph{Subgrid Studies} 
Figures~\ref{fig:misspec-complete} and \ref{fig:gap-complete} show more complete results for investigating the sensitivity to misspecification and gaps in Section~\ref{sec:subgrid} across 4 settings (good/poor coverage and gravity/noise grid). 

\begin{figure}[H]
    \centering
    \begin{minipage}{0.25\textwidth}
        \includegraphics[width=\linewidth]{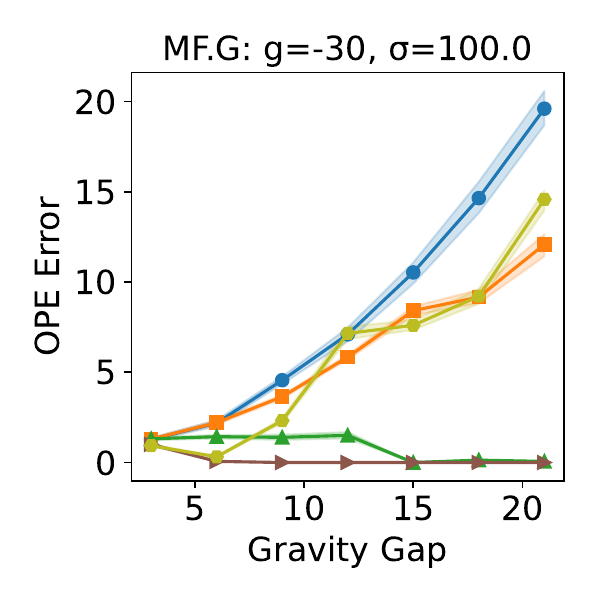}
    \end{minipage}%
    \begin{minipage}{0.25\textwidth}
        \includegraphics[width=\linewidth]{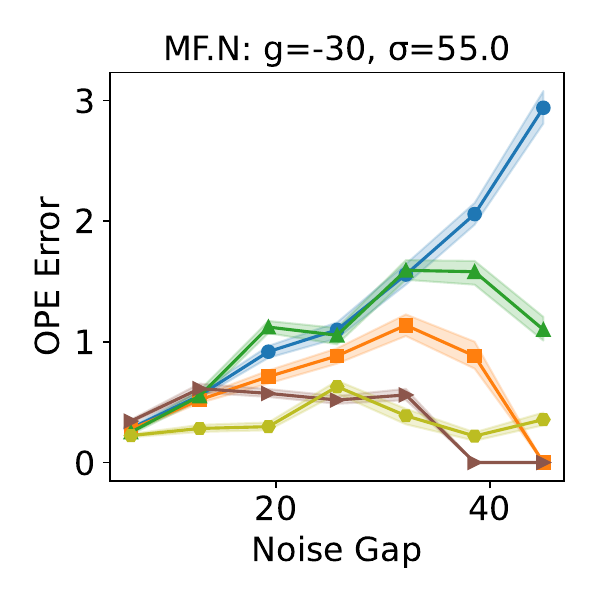}
    \end{minipage}%
    \begin{minipage}{0.25\textwidth}
        \includegraphics[width=\linewidth]{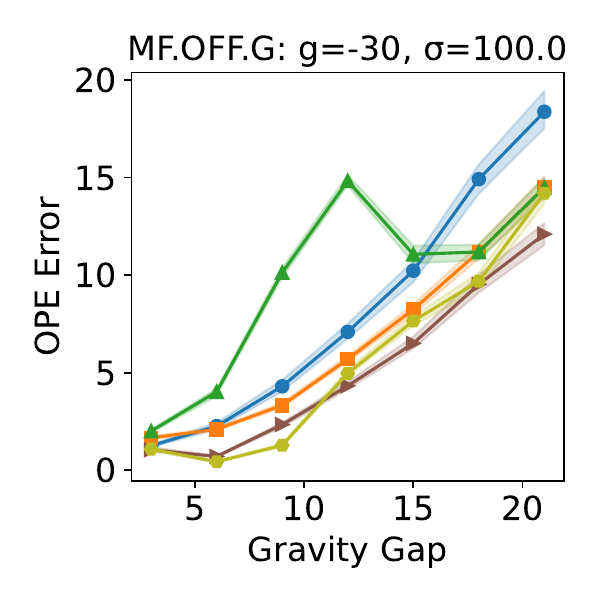}
    \end{minipage}%
    \begin{minipage}{0.25\textwidth}
        \includegraphics[width=\linewidth]{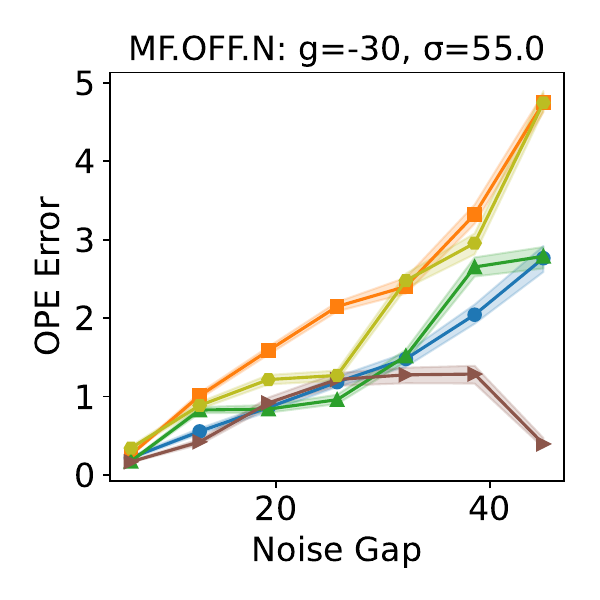}
    \end{minipage}%
    \begin{minipage}{0.25\textwidth}
    \phantom{\includegraphics[width=\textwidth]{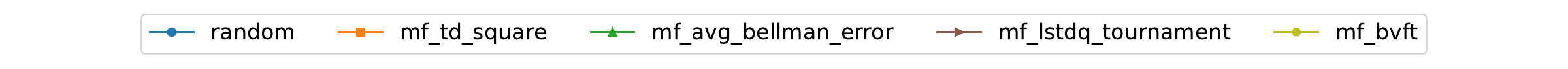}}
    \end{minipage}%
    \vspace{0.01pt}
    \begin{minipage}{\textwidth}
        \includegraphics[width=\linewidth]{figures/appendix_figures/gap_legend.pdf}
    \end{minipage}%
    \caption{Subgrid studies for gaps. Plot \textbf{MF.N} is identical to Figure~\ref{fig:misc}L.}
    \label{fig:gap-complete}
\end{figure}

\begin{figure}[H]
    \centering
    \begin{minipage}{0.25\textwidth}
        \includegraphics[width=\linewidth]{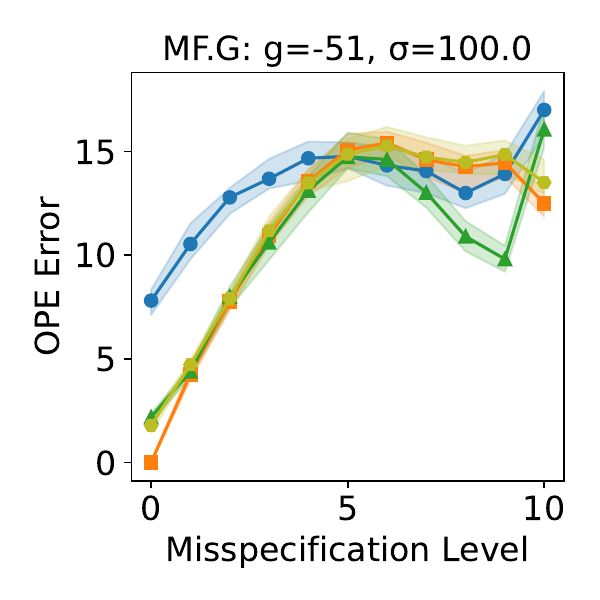}
    \end{minipage}%
    \begin{minipage}{0.25\textwidth}
        \includegraphics[width=\linewidth]{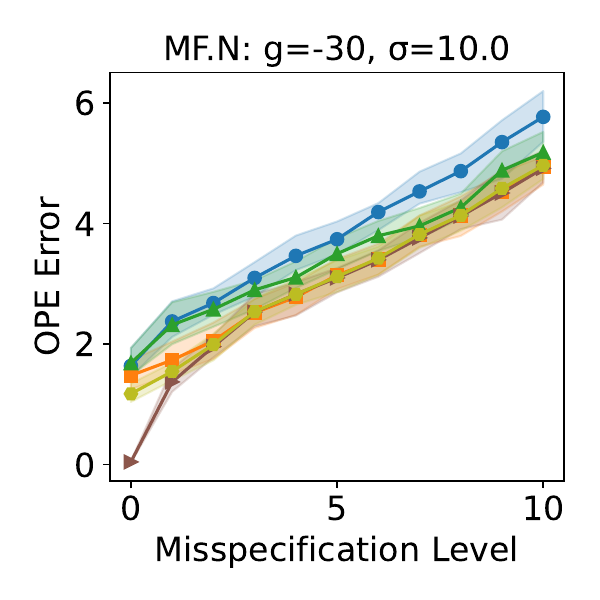}
    \end{minipage}%
    \begin{minipage}{0.25\textwidth}
        \includegraphics[width=\linewidth]{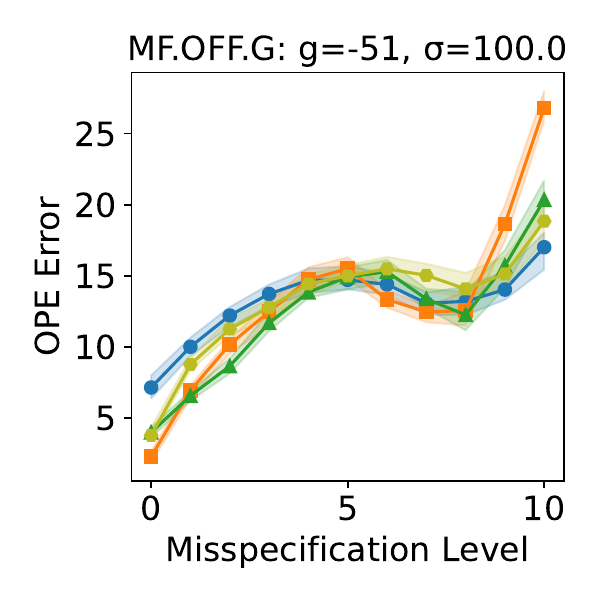}
    \end{minipage}%
    \begin{minipage}{0.25\textwidth}
        \includegraphics[width=\linewidth]{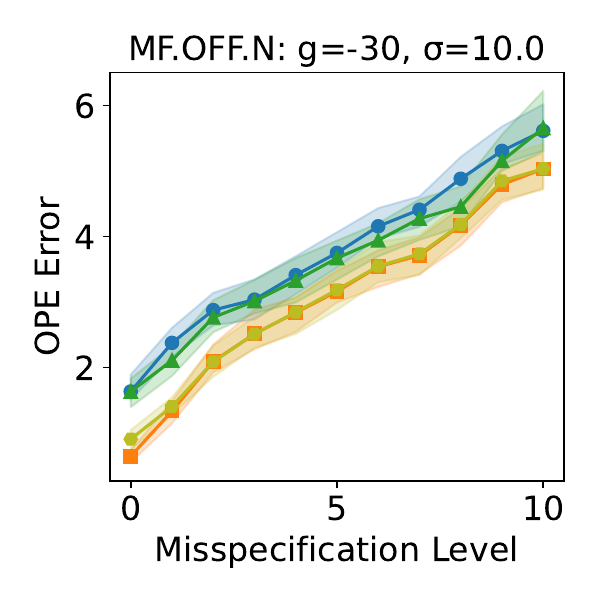}
    \end{minipage}%
    \begin{minipage}{0.25\textwidth}
    \phantom{\includegraphics[width=\textwidth]{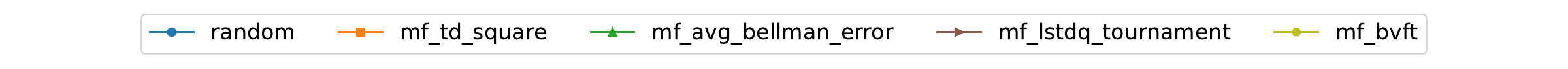}}
    \end{minipage}%
    \vspace{0.01pt}
    \begin{minipage}{\textwidth}
        \includegraphics[width=\linewidth]{figures/appendix_figures/misspecification_legend.pdf}
    \end{minipage}%
    \caption{Subgrid studies for misspecification. Plot \textbf{MF.N} is identical to Figure~\ref{fig:misc}M.}
    \label{fig:misspec-complete}
\end{figure}

\paragraph{Data Coverage} Figure~\ref{fig:coverage-complete} shows more complete results for the data coverage experiment in Section~\ref{sec:exp-coverage}, including more choices of $M^\star$.

\begin{figure}[h]
    \centering
    \begin{minipage}{0.25\textwidth}
        \includegraphics[width=\linewidth]{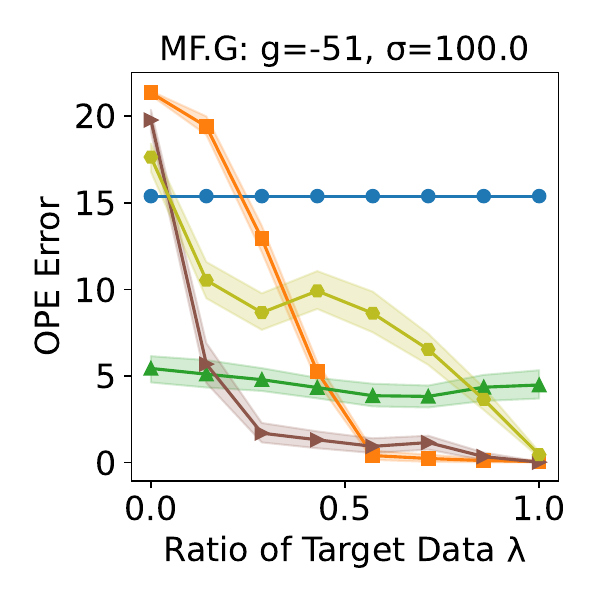}
    \end{minipage}%
    \begin{minipage}{0.25\textwidth}
        \includegraphics[width=\linewidth]{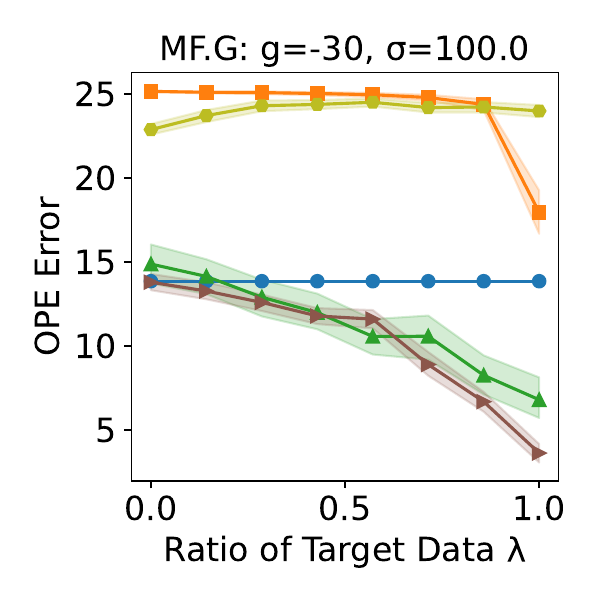}
    \end{minipage}%
    \begin{minipage}{0.25\textwidth}
        \includegraphics[width=\linewidth]{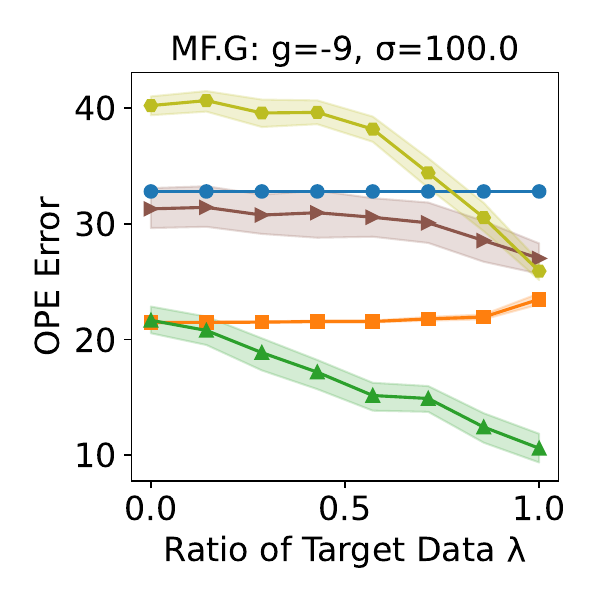}
    \end{minipage}%
    \begin{minipage}{0.25\textwidth}
        \includegraphics[width=\linewidth]{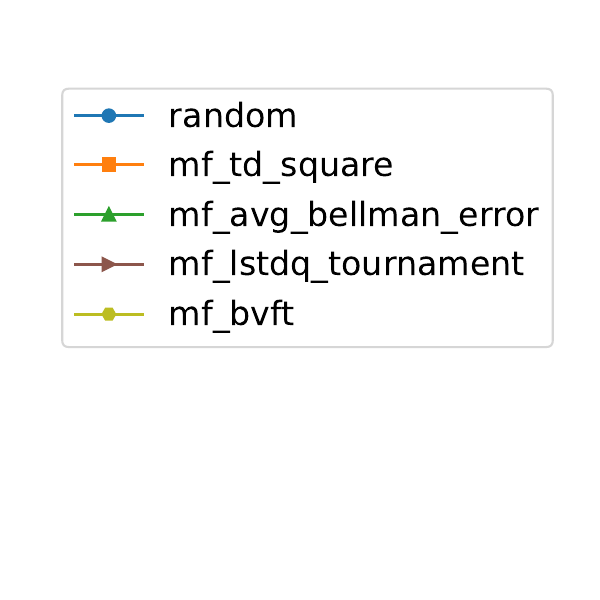}
    \end{minipage}%
    \caption{Data coverage results. The left figure is identical to Figure~\ref{fig:misc}L. \label{fig:coverage-complete}}
\end{figure}

\subsection{Poor Coverage Results} \label{app:poor-coverage}
We now show the counterpart of our model-free main results (Figure~\ref{fig:mainfigure}) under behavior policies that offer poor coverage. This makes the problem very challenging and no single algorithm have strong performance across the board. For example, na\"ive model-based demonstrate strong performance in \textbf{MF.OFF.G} (top row of Figure~\ref{fig:poor-coverage}) and resilience to poor coverage, while still suffers  catastrophic failures in \textbf{MF.OFF.N}. While \lstd generally is more reliable than other methods, it also has worse-than-random performance in one of the environments in \textbf{MF.OFF.G}.

\begin{figure}[h!]
    \centering
    \begin{minipage}{0.75\textwidth}
        \includegraphics[width=\linewidth]{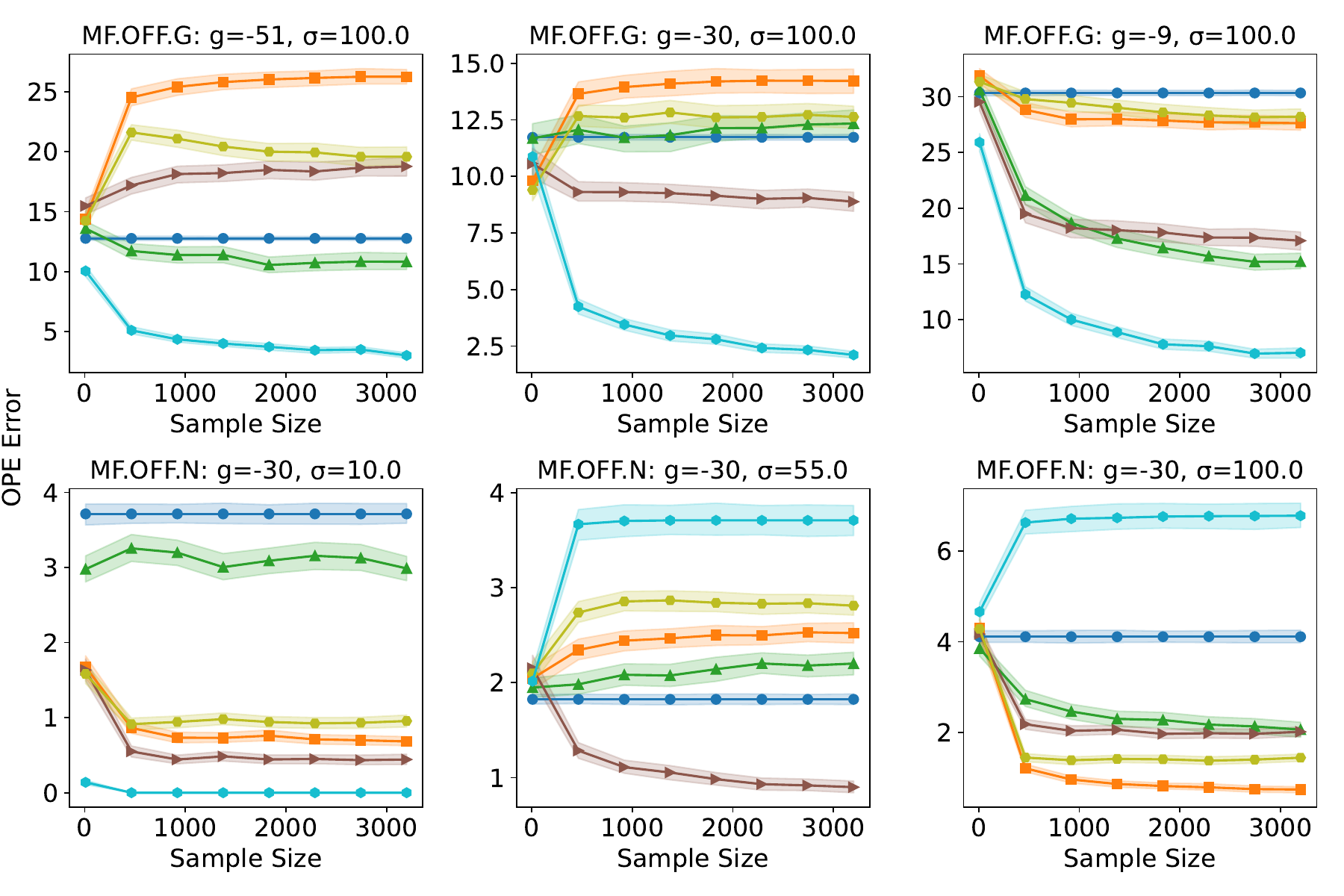}
    \end{minipage}%
    \begin{minipage}{0.25\textwidth}
                \includegraphics[width=\linewidth]{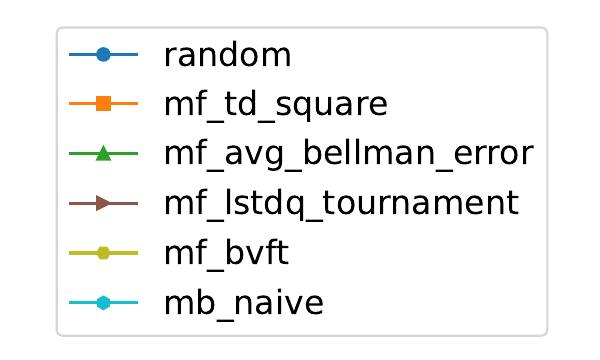}
        \caption{Model-free selection results under behavior policies with poor coverage (\textbf{MF.OFF.G/N}). \label{fig:poor-coverage}}
    \end{minipage}
\end{figure}

\newpage

\subsection{LSTDQ Family} \label{app:lstd}
As mentioned at the end of Section~\ref{sec:mf-select}, our \lstd can have several variants depending on how we design and transform the linear features. Here we compare 3 of them in Figure~\ref{fig:lstdq_sample_eff}. The \lstd method in all other figures corresponds to the ``normalized\_diff'' version. 
\begin{itemize}
    \item \textbf{Vanilla:} $\phi_{i,j} = [Q_i, Q_j]$.
    \item \textbf{Normalized:} $\phi_{i,j} = [Q_i/c_i, Q_j/c_j]$, where $c_i = \sqrt{\mathbb{V}_{(s,a)\sim \mu}[Q_i(s,a)]}$ normalizes the discriminators to unit variance on the data distribution. In practice these variance parameters are estimated from data.
    \item \textbf{Normalized\_diff:} $\phi_{i,j} = [Q_i/c_i, (Q_j-Q_i)/c_{j,i}]$, where $c_i$ and $c_{j,i}$ performs normalization in the same way as above. 
\end{itemize}

\begin{figure}
    \centering
    \includegraphics[width=\linewidth]{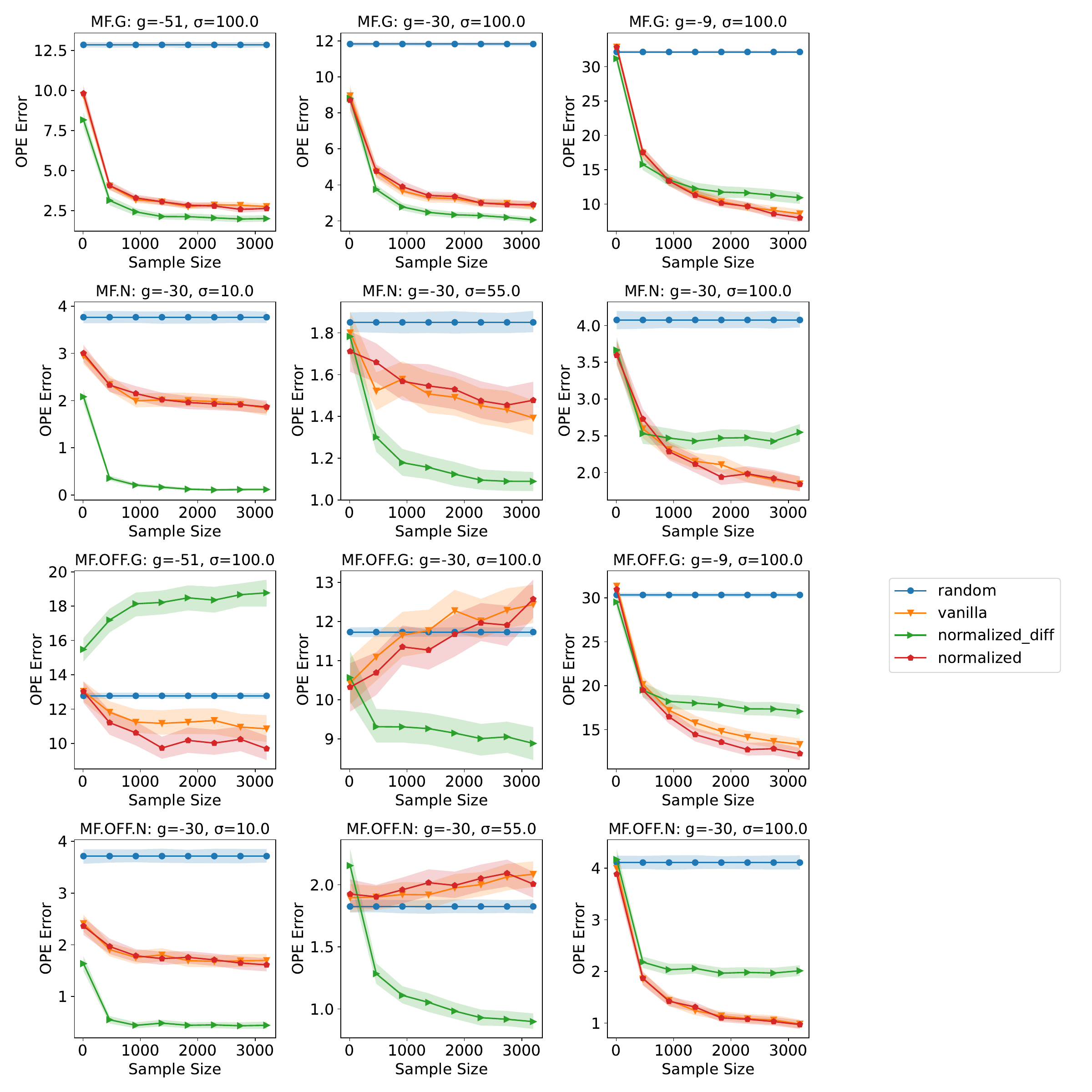}
    \caption{Comparison of variants of \lstd.}
    \label{fig:lstdq_sample_eff}
\end{figure}

\clearpage 

\subsection{$O(1)$ Rollouts} 
In our experiment design, we use a fairly significant number of rollouts $l=128$ to ensure relatively accurate estimation of the Q-values. However, for the average Bellman error and the \lstd algorithms, they enjoy convergence even when $l$ is a constant. For example, consider the average Bellman error:
$$
\EE_{\Dcal}[Q_i(s,a) - r - \gamma Q_i(s',\pi)],
$$
which is an estimation of 
$\EE_{\mu}[Q_i(s,a) - r - \gamma Q_i(s',\pi)]$. 
Thanks to its linearity in $Q_i$, replacing $Q_i$ with its few-rollout (or even single-rollout) Monte-Carlo estimates will leave the unbiasedness of the estimator intact, and Hoeffding's inequality implies convergence as the sample size $n=|\Dcal|$ increases, even when $l$ stays as a constant, which is an advantage compared to other methods. That said, in practice, having a relatively large $l$ can still  be useful as it reduces the variance of each individual random variable that we average across $\Dcal$, and the effect can be significant when $n$ is relatively small. 

A similar but slightly more subtle version of this property also holds for \lstd. Take the vanilla version in Section~\ref{sec:mf-select} as example, we need to estimate
$$
\EE_{\Dcal}[Q_j(s,a)(Q_i(s,a) - r - \gamma Q_i(s',\pi))].
$$
Again, we can replace $Q_j$ and $Q_i$ with their Monte-Carlo estimates, as long as the Monte-Carlo trajectories for $Q_i$ and $Q_j$ are independent. This naturally holds in our implementation when $j\ne i$, but is violated when $j=i$ since $Q_j(s,a)$ and $Q_i(s,a)$ will share the same set of random rollouts, leading to biases. A straightforward resolution is to divide the Monte-Carlo rolllouts into two sets, and $Q_j(s,a)$ and $Q_i(s,a)$ can use different sets when $j\ne i$. We empirically test this procedure in Figure~\ref{fig:rollouts}, where the OPE errors of average Bellman error and different variants of \lstd are plotted against the number of rollouts in each set (i.e., $l/2$). While a small number of rollouts can sometimes lead to reasonable performance, more rollouts are often useful in providing further variance reduction and hence more accurate estimations.   

\begin{figure}[H]
    \centering
\includegraphics[width=\linewidth]{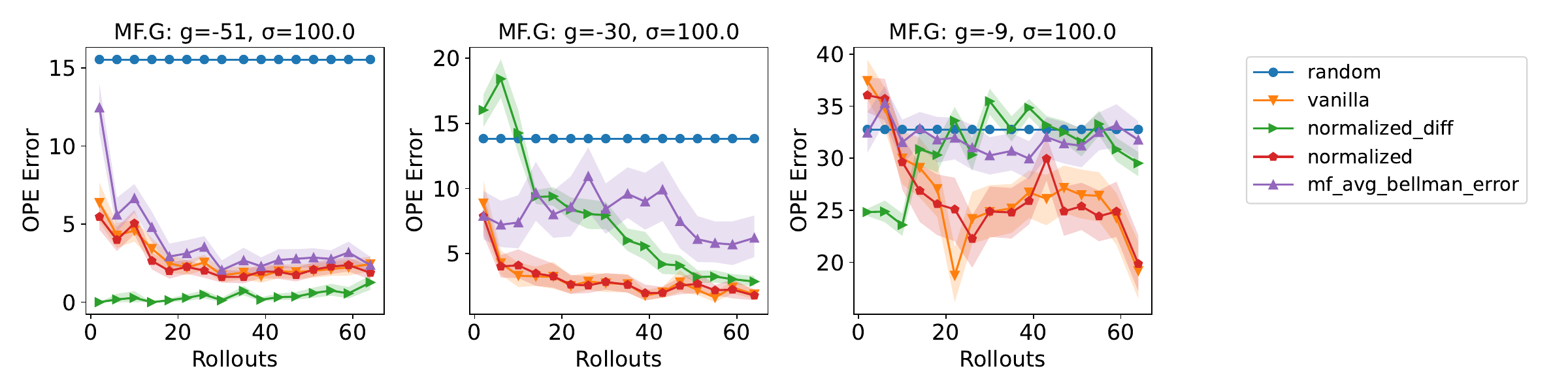}
    \includegraphics[width=\linewidth]{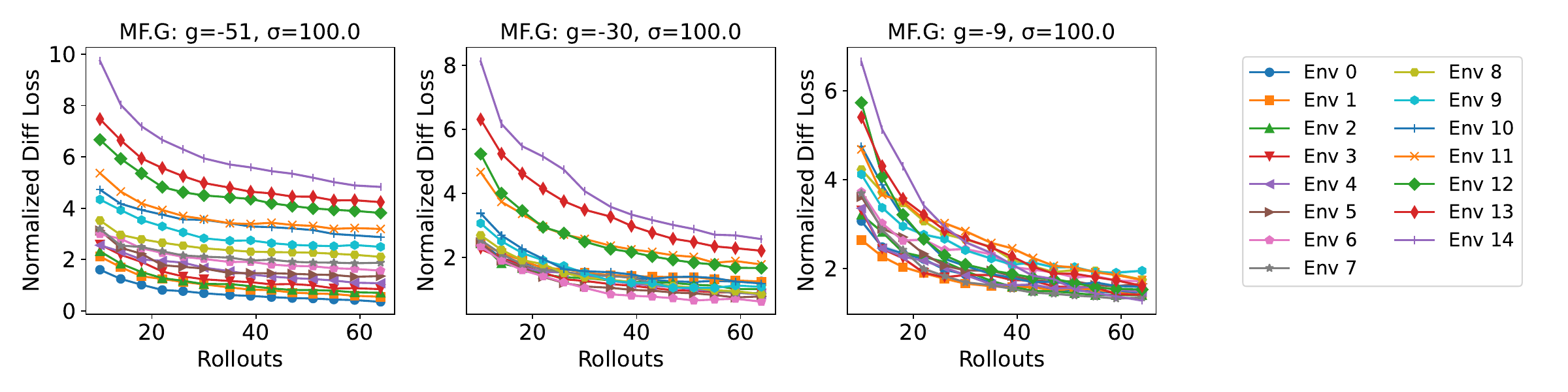}
    \caption{The effect of small rollouts in \lstd methods. Sample size is fixed at $n=3200$ and only $l$ (the number of rollouts) varies. Since the rollouts are divided into two separate sets to ensure independence, each Q-value is estimated using $l/2$ rollouts in this experiment, which is shown on the x-axes. The top row shows the OPE error (i.e., final performance), whereas the bottom row shows the convergence of loss estimates. \label{fig:rollouts}}
\end{figure}

\end{document}